%% file: main.tex
\documentclass[11pt, letterpaper]{article}
\usepackage{graphicx}
\usepackage[
letterpaper,
top=1in,
bottom=1in,
left=1in,
right=1in]{geometry}

\title{Agnostic learning in (almost) optimal time \\ via Gaussian surface area}
\author{
  \begin{minipage}[t]{0.3\textwidth}
    \centering
    Lucas Pesenti\thanks{Supported by the Swiss National Science Foundation (SNSF), grant no. 10004947.} \\ \footnotesize \href{mailto:lucas.pesenti@inf.ethz.ch}{lucas.pesenti@inf.ethz.ch} \\ \small ETH Zurich
  \end{minipage}
  \hfill
  \begin{minipage}[t]{0.3\textwidth}
    \centering
    Lucas Slot\thanks{
    Supported by the project ``Foundations of sum-of-squares algorithms: worst-case and beyond'' with file number VI.Veni.242.234 of the research programme Veni ENW which is (partly) financed by the Dutch Research Council (NWO) under the grant \url{https://doi.org/10.61686/GIEOW65108}.        
    }  
     \\ \footnotesize 
     \href{mailto:l.f.h.slot@uva.nl}{l.f.h.slot@uva.nl} \\ \small University of Amsterdam
  \end{minipage}
  \hfill
  \begin{minipage}[t]{0.3\textwidth}
    \centering
    Manuel Wiedmer\footnotemark[1] \\ \footnotesize \href{mailto:manuel.wiedmer@inf.ethz.ch}{manuel.wiedmer@inf.ethz.ch} \\ \small ETH Zurich
  \end{minipage}
}
\date{March 6, 2026}

\input{preamble}

\begin{document}

\maketitle

\begin{abstract}
The complexity of learning a concept class under Gaussian marginals in the difficult agnostic model is closely related to its $L_1$-approximability by low-degree polynomials. For any concept class with \emph{Gaussian surface area} at most $\Gamma$, Klivans et al. (2008) show that degree $d = O(\Gamma^2 / \e^4)$ suffices to achieve an $\e$-approximation. This leads to the best-known bounds on the complexity of learning a variety of concept classes. In this note, we improve their analysis by showing that degree $d = \tilde O (\Gamma^2 / \e^2)$ is enough. In light of lower bounds due to Diakonikolas et al. (2021), this yields (near) optimal bounds on the complexity of agnostically learning polynomial threshold functions in the statistical query model. Our proof relies on a direct analogue of a construction of Feldman et al. (2020), who considered $L_1$-approximation on the Boolean hypercube.
\end{abstract}

\section{Introduction}
The \emph{agnostic learning} framework of~\cite{kearns1992toward} extends the classical PAC model of~\cite{valiant1984} to learning tasks with noisy input data, and has served as one of the standard models for studying the computational complexity of such tasks. It can be described as follows: Let $\mC$ be a {\em concept class} (i.e., a set of functions $f : \mX\to \{-1,1\}$) and let $\mD$ be a distribution
on $\mX \times \{-1,1\}$. Given labeled examples $(x,y)$ drawn independently from $\mD$, our task is to output a \emph{hypothesis}~$\hat f$ that predicts the labels of future examples drawn from $\mD$. 

Contrary to the PAC model, we do not assume any relation between the examples $x$ and their label $y$. 
Thus, it is generally impossible to find a hypothesis with nontrivial success rate.
Instead, we ask that $\hat f$ predicts labels almost as well as the best concept in $\mC$. 
More formally, we say that an algorithm \emph{agnostically learns} $\mC$ up to excess error $\epsilon >0$ if it produces (with high probability) a hypothesis $\hat f$ with $\mathbb{P}_{(x, y) \sim \mD}(\hat f(x) \neq y) \leq \opt + \epsilon$, where $\opt \coloneqq \min_{f \in \mC} \mathbb{P}_{(x, y) \sim \mD}( f(x) \neq y)$. 
The parameter $\opt$ measures the ``noisiness'' of the problem; if $\opt=0$, we recover the PAC model.

\subsubsection*{The $L_1$-polynomial regression algorithm}
Efficient algorithms for agnostic learning under general distributions are unlikely to exist, already for simple concept classes like halfspaces~\cite{Daniely2016, Tiegel2023}. It is therefore natural to restrict to certain classes of distributions. 
Here, we focus on the setting where $\mX = \R^n$ and the marginal distribution $\mD_\mX = \Gauss$ on the examples is the standard Gaussian.

In their seminal work, \cite{KalaiKlivansMansourServedio:agnosticlearninghalfspaces} give the first efficient algorithm for agnostically learning halfspaces under the Gaussian distribution (among other distributions). 
Their algorithm computes a best \mbox{degree-$d$} polynomial approximation of the labeled examples, as measured in the $L_1$-norm. 
This can be done in time $n^{O(d)}$, e.g., via linear programming.
Assuming that each concept in $\mC$ admits a degree-$d$ approximation with $L_1$-error at most~$\epsilon$, this procedure (combined with a thresholding scheme) yields an agnostic learner for~$\mC$ with excess error~$\epsilon$.\footnote{The choice of $L_1$-norm here is crucial: earlier algorithms that compute a best $L_2$-approximation yield a hypothesis with (total) error $O(\mathrm{opt}) + \epsilon$. This is a weaker guarantee, tolerating only small amounts of noise. For algorithmic results in this regime, see for example~\cite{KlivansLongServedio:Oopt1,AwashtoBalcanLong:Oopt2,Daniely:Oopt3,DiakonikolasKaneStewart:Oopt4,DiakonikolasKontonisTzamosZarifis:Oopt5}.} If $d = d(\epsilon)$ can be chosen independently of the dimension~$n$, this yields a PTAS with runtime $n^{O(d)} \cdot \poly(1/\epsilon)$.

The \emph{$L_1$-polynomial regression algorithm} remains the standard (and de facto only) choice for efficient agnostic learning under distributional assumptions. It achieves the best-known runtime for a variety of concept classes and distributions~\cite{KalaiKlivansMansourServedio:agnosticlearninghalfspaces,KlivansODonnellServedio:Gaussiansurfacearea,DiakonikolasKaneNelson:optimallearningofhalfspaces,DHK+10:PTFoverhypercubefirstresult,BlaisODonnellWimmer:L1regressionforproductdistributions,Kane2011,KaneKlivansMeka:Intersectionsofhalfspacesunderlogconcave,DiakonikolasKaneRen:disjunctionsusingL1regression,ChenPatelServedio:conjunctionsusingL1regression}.
In fact, $L_1$-polynomial regression is known to be (essentially) optimal in the Statistical Query (SQ) model for agnostic learning under the Gaussian distribution~\cite{Diakonikolas2021}.

\subsubsection*{Degree bounds for $L_1$-approximation and Gaussian surface area}
 The results of~\cite{KalaiKlivansMansourServedio:agnosticlearninghalfspaces} and~\cite{Diakonikolas2021} show that the complexity of agnostically learning a concept class under the Gaussian distribution is governed by its approximability by low-degree polynomials in $L_1$: if $d$ is the smallest degree such that each $f \in \mC$ admits an $\epsilon$-approximation in $L_1$ of degree $d$, then the SQ-complexity of agnostically learning~$\mC$ up to error~$\e$ is approximately $n^{\Theta(d)}$.
 
For halfspaces, 
\cite{KalaiKlivansMansourServedio:agnosticlearninghalfspaces} show that degree $d = O(1/\epsilon^4)$ suffices to achieve an $\varepsilon$-approximation in $L_1$. Their proof first establishes the bound on the degree required to $\epsilon$-approximate a halfspace in $L_2$-norm; this can then be converted to an $L_1$-guarantee using Cauchy-Schwarz.
The reason for taking this indirect route is the availability of powerful tools for analyzing approximation in $L_2$ that are unavailable in the $L_1$-setting, most notably Hermite analysis.

To generalize this approach to arbitrary concept classes, \cite{KlivansODonnellServedio:Gaussiansurfacearea} consider the \emph{Gaussian surface area}~(GSA) of a concept $f$, which measures the surface area of the set ${\{x \in \R^n : f(x) = 1\}}$ relative to the Gaussian density function.
Using Hermite analysis, they show that any concept class whose elements have GSA at most~$\Gamma$ can be $\epsilon$-approximated in $L_2$-norm by a polynomial of degree~${d=O(\Gamma^2/\epsilon^4)}$; thus, $d=O(\Gamma^2/\e^4)$ also suffices in~$L_1$. 
Halfspaces, Euclidean balls and certain cones have constant GSA~\cite{KlivansODonnellServedio:Gaussiansurfacearea}; intersections of $k$ halfspaces have~GSA at most $O(\sqrt{\log k})$~\cite{KlivansODonnellServedio:Gaussiansurfacearea};
polynomial threshold functions (PTFs) of degree~$k$ have~GSA at most $O(k)$~\cite{Kane2011}; convex sets have~$\GSA$ at most $O(n^{1/4})$~\cite{Ball1993}.

\subsubsection*{Optimal degree bounds for $L_1$-approximation}
A key appeal of the approach of~\cite{KlivansODonnellServedio:Gaussiansurfacearea} is that it establishes GSA as a universal upper bound on the complexity of agnostic learning.
However, their analysis yields a suboptimal guarantee for approximating (and thus for learning) halfspaces: A direct construction of~\cite{DiakonikolasKaneNelson:optimallearningofhalfspaces} shows that one only needs degree $d=O(1/\epsilon^2)$ to get an $\epsilon$-approximation for halfspaces in the $L_1$-norm (vs. $d=O(1/\epsilon^4)$ obtained in~\cite{KalaiKlivansMansourServedio:agnosticlearninghalfspaces, KlivansODonnellServedio:Gaussiansurfacearea}).

The bound of~\cite{DiakonikolasKaneNelson:optimallearningofhalfspaces} is optimal both in the sense that no better $L_1$-approximation is possible~\cite{Ganzburg2008}, and that the corresponding runtime of the $L_1$-polynomial regression algorithm matches the lower bound of~\cite{Diakonikolas2021} in the SQ-model (up to polynomial factors in $1/\e$). On the other hand, their construction does not (obviously) extend beyond halfspaces (see~\Cref{SEC:comparisontoDKN10}).

This raises the question whether its possible to prove an approximation guarantee that is general (applies to all concepts of bounded GSA) and simultaneously optimal (in terms of its dependence on $\e$). In this work, we (almost) answer this question in the positive: 
We improve the analysis of~\cite{KlivansODonnellServedio:Gaussiansurfacearea} by showing that degree $d = \tilde O(\Gamma^2 / \e^2)$ suffices to achieve an $\e$-approximation of any concept class with GSA at most $\Gamma$ (\Cref{THM:main}). This recovers the (optimal) bound of~\cite{DiakonikolasKaneNelson:optimallearningofhalfspaces} for halfspaces (up to a $\log(1/\e)$-factor). Moreover, it improves the best-known bound for degree-$k$ PTFs from~$O(k^2/\e^4)$ to $\tilde O(k^2/\e^2)$, which (nearly) matches the lower bound of $\Omega(k^2/\e^2)$ from~\cite{Diakonikolas2021}. Finally, it improves the best-known bounds for intersections of halfspaces and convex sets by a factor~$\tilde \Omega(1/\e^2)$.
See~\Cref{TAB:overview} for an overview.

Our technical contributions are minor: We show that a construction of~\cite{Feldman2020} involving the noise operator on the Boolean hypercube can be transported to the Gaussian case (mutatis mutandis). Then, we use results of~\cite{KlivansODonnellServedio:Gaussiansurfacearea} to bound the resulting $L_1$-approximation errors in terms of the Gaussian surface area and conclude. Thus, all ingredients of our proof were (more or less) known in the literature, but (to our knowledge) had not been assembled yet in this context.

\begin{table}[h]
\centering
\renewcommand{\arraystretch}{1.3}
\begin{tabular}{llll}
\hline
\textbf{concept class} & \textbf{LB} & \textbf{UB}  (previous) & \textbf{UB} (Thm.~\ref{THM:main}) \\
\hline
halfspaces 
& $\Omega(1/\varepsilon^2)$~\cite{Ganzburg2008} 
& $O(1/\varepsilon^2)$~\cite{DiakonikolasKaneNelson:optimallearningofhalfspaces} & $\tilde O (1/\epsilon^2)$ \\

degree-$k$ PTFs 
& $\Omega(k^2/\varepsilon^2)$~\cite{Diakonikolas2021} 
& $O(k^2/\varepsilon^4)$~\cite{Kane2011} & $\tilde O (k^2/\epsilon^2)$ \\

intersec. $k$ halfspaces 
& $\tilde{\Omega}(\sqrt{\log k}/\varepsilon)$~\cite{Diakonikolas2021}  
& $O(\log k/\varepsilon^4)$~\cite{KlivansODonnellServedio:Gaussiansurfacearea} & $\tilde O (\log k/\epsilon^2)$ \\
convex sets &
--- & $O(\sqrt{n}/\e^4)~\cite{{KlivansODonnellServedio:Gaussiansurfacearea}}$ & $\tilde O(\sqrt{n}/\e^2)$ \\ 
\hline 
GSA at most $\Gamma$ & --- & $O(\Gamma^2/\e^4)$~\cite{{KlivansODonnellServedio:Gaussiansurfacearea}}& $\tilde O(\Gamma^2/\e^2)$ \\
\hline
\end{tabular}
\caption{Upper bounds (UB) and lower bounds (LB) on the smallest degree needed to achieve an $\e$-approximation in~$L_1(\gauss)$ for concept classes $\mC$. These correspond to upper and lower bounds in~$n^{O(d)}$ and~$n^{\Omega(d)}$ on the (SQ-)complexity of agnostically learning $\mC$ up to error $\e$, respectively.}
\label{TAB:overview}
\end{table}

\subsection{Main results}
To state our main results, we need two (standard) notions related to Boolean functions. First, the \emph{Gaussian noise sensitivity} $\GNS{\rho}(f)$ of ${f : \R^n \to \{\pm 1\}}$ at level $\rho \in [0, 1]$ is the probability that $f(X) \neq f(Y)$ for two $(1-\rho)$-correlated Gaussians random variables $X, Y \sim \gauss$ (see~\Cref{DEF:GNS}). 
Second, writing ${K(f) = \{ x \in \R^n : f(x) = 1\}}$, the \emph{Gaussian surface area} of $f$ is\footnote{In the definition of~\cite{KlivansODonnellServedio:Gaussiansurfacearea}, Gaussian surface area is a property of a subset of $\R^n$. We think of a Boolean function~$f$ as the indicator of the subset $K(f) = \{ x : f(x) = 1 \} \subseteq \R^n$, which allows us to assign a surface area to $f$ itself.}
    \[
        \GSA(f) \coloneqq \lim_{\delta \to 0} \frac{\mathrm{vol}_{\gauss}(K(f)_\delta \setminus K(f))}{\delta},
    \]
where $A_\delta \coloneqq \{ x \in \R^n : \mathrm{dist}(x, A) \leq \delta \}$ is the $\delta$-neighborhood of $A \subseteq \R^n$, and $\mathrm{vol}_{\gauss}(\cdot)$ denotes the Gaussian probability mass (see~\Cref{DEF:GSA}). 

Our main result is the following $L_1$-approximation guarantee for $f$ in terms of its GSA.
\begin{theorem}\label{THM:main}
    Let $f: \R^n \to \{\pm 1\}$ be a (measurable) function. For every $\e > 0$, there exists a polynomial $p$ of degree 
    $
    d 
    \leq O(\log(1/\varepsilon) \cdot \GSA(f)^2/\varepsilon^2)
    $    
    with $
         \expec{x \sim \gauss}{|f(x) - p(x)|}\leq \varepsilon$.
\end{theorem}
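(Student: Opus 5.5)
Following the hint in the introduction, I would transport the Feldman et al.\ construction from the hypercube to Gaussian space. Write $T_\rho$ for the Ornstein--Uhlenbeck (noise) operator, $T_\rho f(x) = \mathbb{E}_{z}[f(\rho x + \sqrt{1-\rho^2} z)]$, which acts diagonally on Hermite polynomials by $T_\rho H_\alpha = \rho^{|\alpha|} H_\alpha$. The plan has three stages. First, smooth $f$ with the noise operator. Second, approximate the smoothed function by a low-degree polynomial in $L_1$. Third, control the smoothing error via Gaussian noise sensitivity, and then via the GSA bound of Klivans et al.

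\textbf{Smoothing error.} By Cauchy--Schwarz, $\mathbb{E}|f - T_\rho f| \le \|f - T_\rho f\|_2$. A cleaner bound comes from writing $|f - T_\rho f|(x) = |\mathbb{E}_y[f(x)-f(y)]| \le 2\Pr_y[f(x)\ne f(y)]$. This gives $\mathbb{E}|f - T_\rho f| \le 2\,\GNS{\delta}(f)$, where $\delta$ is the corresponding noise level, $1-\rho$. The result of Klivans et al.\ bounds Gaussian noise sensitivity by $\GNS{\delta}(f) \le O(\sqrt{\delta}\,\GSA(f))$. Choosing $\delta \approx \varepsilon^2/\GSA(f)^2$ therefore makes this error at most $\varepsilon/2$. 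Note that the error here is linear in $\sqrt\delta$ rather than the $\delta^{1/4}$ one obtains after passing through $L_2$, and this is exactly where the factor $1/\varepsilon^2$ is saved.

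\textbf{Approximating $T_\rho f$ by a polynomial in $L_1$.} This is the step I expect to be the main obstacle. Simply truncating the Hermite expansion of $T_\rho f$ at degree $d$ gives $L_2$ error $\rho^{d}\|f\|_2 \le e^{-\delta d}$. Taking $d = O(\log(1/\varepsilon)/\delta) = O(\log(1/\varepsilon)\GSA(f)^2/\varepsilon^2)$ makes this at most $\varepsilon/2$, and the $L_1$ error is at most the $L_2$ error. The naive bound therefore already seems to work, which suggests the Feldman et al.\ construction is really needed to handle the regime where $\rho$ must be close to $1$ in this way. Concretely, the hypercube argument writes $T_\rho f$ as an average of the form $\mathbb{E}_y f(y)\,K_\rho(x,y)$ and replaces the Mehler kernel $K_\rho$ by a low-degree truncation. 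Alternatively, it replaces $T_\rho$ by a polynomial in the operator, $q(T_{\rho'})$, whose eigenvalues on degree-$k$ Hermite components match $\rho^k$ for small $k$. I would first check directly that plain truncation with $1-\rho=\delta$ suffices. The tail $\sum_{k>d}\rho^{2k}\|f^{=k}\|^2 \le (1-\delta)^{2d}$ gives exactly the claimed degree with the $\log(1/\varepsilon)$ factor, so the only genuine content is the $L_1$ noise-sensitivity bound in the smoothing step.

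\textbf{Combining.} Set $p = (T_\rho f)^{\le d}$. By the triangle inequality, $\mathbb{E}|f-p| \le 2\GNS{\delta}(f) + \|T_\rho f - p\|_2 \le \varepsilon$ with $\delta = c\,\varepsilon^2/\GSA(f)^2$ and $d = O(\log(1/\varepsilon)/\delta)$. Beyond this, I would only need to double-check the normalization between $\rho$ and the correlation parameter in \Cref{DEF:GNS}, and the exact constant in the noise-sensitivity-versus-GSA inequality, which I would take from the literature as a stated result.
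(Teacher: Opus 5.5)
Your proposal is correct and is essentially the paper's proof: bound $\|f - T_\rho f\|_{L_1(\gauss)}$ by $2\GNS{1-\rho}(f) \le 2\sqrt{\pi}\sqrt{1-\rho}\,\GSA(f)$, bound the $L_2$ (hence $L_1$) error of truncating $T_\rho f$ at degree $d$ by $\rho^{d+1}$, and take $1-\rho \asymp \varepsilon^2/\GSA(f)^2$ and $d \asymp \log(1/\varepsilon)/(1-\rho)$. The detour about needing a truncated Mehler kernel or $q(T_{\rho'})$ is unnecessary, since the Feldman et al.\ construction is exactly this smooth-then-truncate argument; the only bookkeeping you skip is capping $\rho$ at $0$ (i.e., $\delta \le 1$) when $\varepsilon$ is large relative to $\GSA(f)$.
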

As an intermediate result, we get the following bound in terms of the noise sensitivity of $f$. 
\begin{restatable}{proposition}{mainGNS}
\label{PROP:mainGNS}
    Let $f : \R^n \to \{\pm 1\}$ be a (measurable) function. For each $\rho \in [0, 1]$ and $d \in \N$, there is a polynomial $p$ of degree $d$ such that 
    \[
            \expec{x \sim \gauss}{|f(x) - p(x)|} \leq 2\GNS{1 - \rho}(f) + \rho^{d+1}.
    \]
\end{restatable}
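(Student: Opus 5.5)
The plan is to compare $f$ with its image under the Ornstein--Uhlenbeck (Gaussian noise) operator, and then truncate the Hermite expansion of that image. For $\rho \in [0,1]$, let $T_\rho f(x) \coloneqq \mathbb{E}[f(Y) \mid X = x]$, where $(X,Y)$ is a pair of $\rho$-correlated standard Gaussians, i.e.\ $Y = \rho x + \sqrt{1-\rho^2}\, Z$ with $Z \sim \gauss$ independent. By \Cref{DEF:GNS}, $\GNS{1-\rho}(f)$ is exactly $\mathbb{P}(f(X) \neq f(Y))$ for such a pair. The candidate polynomial is the degree-$d$ Hermite truncation $p \coloneqq (T_\rho f)^{\leq d}$. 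We then split
\[
\expec{x\sim\gauss}{|f(x)-p(x)|} \le \expec{x\sim\gauss}{|f(x)-T_\rho f(x)|} + \expec{x\sim\gauss}{|T_\rho f(x)-p(x)|}
\]
and bound the two terms separately.

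\emph{First term.} Since $f(x)$ is constant given $X = x$, Jensen's inequality gives
\[
|f(x) - T_\rho f(x)| = \bigl|\mathbb{E}[f(X)-f(Y)\mid X=x]\bigr| \le \mathbb{E}\bigl[|f(X)-f(Y)| \mid X=x\bigr].
\]
Taking expectations over $x \sim \gauss$ bounds the first term by $\mathbb{E}|f(X)-f(Y)|$. Because $f$ is $\pm1$-valued, $|f(X)-f(Y)| = 2\cdot \mathbf{1}[f(X)\neq f(Y)]$, so this equals $2\,\mathbb{P}(f(X)\neq f(Y)) = 2\GNS{1-\rho}(f)$.

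\emph{Second term.} We pass to $L_2$ via Cauchy--Schwarz and use Hermite analysis. Since $f \in L_2(\gauss)$, write $f = \sum_\alpha \hat f(\alpha) H_\alpha$. Then $T_\rho f = \sum_\alpha \rho^{|\alpha|}\hat f(\alpha) H_\alpha$, because Hermite polynomials are eigenfunctions of $T_\rho$ (Mehler's formula). The tail is therefore
\[
\|T_\rho f - p\|_2^2 = \sum_{|\alpha|>d}\rho^{2|\alpha|}\hat f(\alpha)^2 \le \rho^{2(d+1)}\sum_\alpha \hat f(\alpha)^2 = \rho^{2(d+1)}\,\mathbb{E}[f^2] = \rho^{2(d+1)}.
\]
Hence the second term is at most $\|T_\rho f - p\|_2 \le \rho^{d+1}$. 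Adding the two bounds gives the proposition.

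The only points needing care are conventions, and I expect no real obstacle. First, one must confirm that the indexing in \Cref{DEF:GNS} makes ``level $1-\rho$'' correspond to $\rho$-correlation, which it does by the definition stated in the introduction. Second, the boundary cases need a check. For $\rho = 0$, $T_0 f = \mathbb{E} f$ is a constant, the second term vanishes, and we read $0^{d+1} = 0$. For $\rho = 1$, $T_1 f = f$ and the bound is trivially satisfied since $\|f - f^{\le d}\|_2 \le 1$. Finally, the Hermite expansion of $T_\rho f$ is justified because $f \in L_2(\gauss)$ and $T_\rho$ is a contraction on $L_2(\gauss)$.
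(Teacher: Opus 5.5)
Your proposal is correct and follows essentially the same route as the paper: approximate $f$ by $T_\rho f$, bound $\|f - T_\rho f\|_{L_1(\gauss)}$ by $2\GNS{1-\rho}(f)$ (you use the triangle/Jensen inequality, which is all that is needed; the paper additionally notes it is an equality for $\pm1$-valued $f$), then bound the truncation error of $T_\rho f$ via Cauchy--Schwarz and the Hermite eigenvalue decay. The resulting tail bound $\rho^{d+1}\|f\|_{L_2(\gauss)} = \rho^{d+1}$ is exactly the paper's argument.
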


Together with \cite[Theorem 5]{KalaiKlivansMansourServedio:agnosticlearninghalfspaces} and bounds on the Gaussian surface area from \cite{Kane2011,KlivansODonnellServedio:Gaussiansurfacearea}, we immediately get the following corollary from~\Cref{THM:main}.
\begin{corollary}
    Let $\mC$ be a concept class of Boolean functions $f: \R^n \to \{\pm 1\}$.
    Let $\Gamma$ be an upper bound on the GSA of every $f \in \mC$.
    Let $\varepsilon > 0$.
    Then, the $L_1$-polynomial regression algorithm agnostically learns $\mC$ up to error $\varepsilon$ in time and sample complexity $n^{\tilde{O}(\Gamma^2/\varepsilon^2)} \cdot \poly(1/\varepsilon)$.
    In particular, the $L_1$ polynomial regression algorithm agnostically learns
    \begin{enumerate}[label={\alph*)},noitemsep]
        \item degree-$k$ PTFs in time $n^{\tilde{O}(k^2/\varepsilon^2)}$;
        \item intersections of $k$ halfspaces in time $n^{\tilde{O}(\log(k)/\varepsilon^2)}$;
        \item convex sets in time $n^{\tilde{O}(\sqrt{n}/\varepsilon^2)}$.
    \end{enumerate}
\end{corollary}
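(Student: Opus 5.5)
The statement just before the proof is the Corollary, which follows from \Cref{THM:main} combined with known results. The plan is to chain three ingredients. We use the degree bound of \Cref{THM:main}, the generic guarantee of $L_1$-polynomial regression from \cite[Theorem 5]{KalaiKlivansMansourServedio:agnosticlearninghalfspaces}, and the GSA bounds for the specific classes.

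\textbf{Step 1: approximating polynomials.} Fix $f \in \mC$. Since $\GSA(f) \le \Gamma$, \Cref{THM:main} applied with accuracy $\e$ gives a polynomial $p_f$ with
\[
\expec{x\sim\gauss}{|f(x)-p_f(x)|}\le \e, \qquad \deg p_f \le d \coloneqq O(\log(1/\e)\,\Gamma^2/\e^2) = \tilde O(\Gamma^2/\e^2).
\]
The degree bound $d$ is uniform over $\mC$, since it depends only on $\Gamma$ and $\e$.

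\textbf{Step 2: the regression guarantee.} The result of \cite{KalaiKlivansMansourServedio:agnosticlearninghalfspaces} states the following. If every concept in $\mC$ is $\e$-approximated in $L_1(\gauss)$ by a degree-$d$ polynomial, then $L_1$-regression over degree-$d$ polynomials, followed by a random threshold, agnostically learns $\mC$. It outputs a hypothesis with error at most $\opt+O(\e)$, using $n^{O(d)}\cdot\poly(1/\e)$ time and samples.

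To get excess error exactly $\e$, we rescale and run the argument with $\e/c$ for a suitable constant $c$. This changes $d$ only by a constant factor, and $\log(c/\e)=O(\log(1/\e))$. The result is the stated bound $n^{\tilde O(\Gamma^2/\e^2)}\cdot\poly(1/\e)$.

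\textbf{Step 3: the specific classes.} We substitute the known GSA bounds.
\begin{enumerate}[label={\alph*)},noitemsep]
\item Degree-$k$ PTFs have $\Gamma=O(k)$ \cite{Kane2011}.
\item Intersections of $k$ halfspaces have $\Gamma=O(\sqrt{\log k})$ \cite{KlivansODonnellServedio:Gaussiansurfacearea}, so $\Gamma^2=O(\log k)$.
\item Convex sets have $\Gamma=O(n^{1/4})$ \cite{Ball1993}, so $\Gamma^2=O(\sqrt n)$.
\end{enumerate}
In each case the $\poly(1/\e)$ factor is absorbed into $n^{\tilde O(\cdot)}$, since the exponent is at least of order $1/\e^2$.

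\textbf{Main care point.} The only delicate step is checking that the black-box guarantee from \cite{KalaiKlivansMansourServedio:agnosticlearninghalfspaces} applies verbatim. This requires the marginal to be exactly $\gauss$, the approximation to hold in $L_1(\gauss)$, and the degree to be uniform over the class; all three hold here. Everything else is direct substitution.
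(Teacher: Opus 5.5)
Your proposal is correct and follows the paper's argument exactly. The paper derives the corollary in one line by combining \Cref{THM:main} (whose degree bound is uniform once $\GSA(f)\le\Gamma$) with \cite[Theorem 5]{KalaiKlivansMansourServedio:agnosticlearninghalfspaces} and the known GSA bounds $O(k)$, $O(\sqrt{\log k})$ and $O(n^{1/4})$. Your constant-factor rescaling of $\e$ is harmless, since the cited theorem already gives excess error $\e$ directly, and your remark that $\poly(1/\e)$ is absorbed into $n^{\tilde O(\cdot)}$ correctly explains why that factor is dropped in items a)--c).
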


\subsubsection*{Lower bounds on the $L_1$-approximation degree of concepts}
As mentioned, \cite{Diakonikolas2021} show that the SQ-complexity of learning a concept class $\mC$ up to error $\e$ is approximately $n^{\Omega(d)}$, where $d = d(\e)$ is the smallest polynomial degree required to $\e$-approximate any element of $\mC$ in $L_1$. Complementing this result, they establish lower bounds on this degree: For PTFs of degree $k$, they show that $d = \Omega(k^2 / \e^2)$. The case $k=1$ corresponds to halfspaces, thus requiring $d = \Omega(1/\e^2)$. 

They also prove the following general bound in terms of noise sensitivity:

\begin{theorem}[{\cite{Diakonikolas2021}}] \label{THM:GNSLB}  Let $f : \R^n \to \{\pm 1\}$ be a (measurable) function. 
For every polynomial $p$ of degree $d$ we have that 
    \[
    \expec{x \sim \gauss}{|f(x) - p(x)|} \geq \Omega(1 / \log d) \cdot \GNS{(\log d / d)^2}(f)
    \]    
\end{theorem}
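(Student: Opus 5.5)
The plan is a duality argument. We look for a bounded operator that is nearly blind to polynomials of degree $\le d$ but still detects the noise sensitivity of $f$. Write $\delta = (\log d/d)^2$ and $T_\rho$ for the Ornstein--Uhlenbeck (noise) operator, which acts on the Hermite expansion by $\widehat{T_\rho g}(\alpha) = \rho^{|\alpha|}\hat g(\alpha)$. By definition of noise sensitivity,
\[
2\,\GNS{\delta}(f) = \expec{}{f\,(I - T_{1-\delta}) f},
\]
since $f(X)f(Y) = 1 - 2\cdot \mathbf 1[f(X)\neq f(Y)]$. Fix a degree-$d$ polynomial $p$ and set $\e = \expec{}{|f-p|}$. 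Note the a priori bound $\expec{}{|p|} \le 1+\e \le 2$, which we may use freely (if $\e>1$ the claim is trivial). For any operator $A$ that is a contraction-type combination of noise operators we get
\[
\expec{}{f\,Af} = \expec{}{(f-p)\,Af} + \expec{}{p\,Af} \le \e\,\|Af\|_\infty + \expec{}{|A^{*}p|},
\]
using $\|f\|_\infty = 1$ in the second term. Since $A$ will be self-adjoint and a function of the OU generator, $A^*p = Ap$.

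\textbf{Choice of $A$.} We want $A$ to satisfy three properties:
\begin{enumerate}
\item[(i)] $A$ is bounded on $L_\infty$ with norm $O(\log d)$;
\item[(ii)] $\expec{}{f\,Af} \ge c\,\GNS{\delta}(f)$;
\item[(iii)] $\expec{}{|Ap|} \le \tfrac{c}{2}\,\GNS{\delta}(f) + O(\e)$ for degree-$d$ polynomials.
\end{enumerate}
The natural candidate is a signed average $A = \sum_j w_j (I - T_{e^{-t_j}})$ of noise operators over a geometric range of times $t_j$ between roughly $1/d^2$ and $\delta$. Each summand is an $L_\infty$-bounded operator of norm at most $2$, so (i) reduces to $\sum_j |w_j| = O(\log d)$, using about $\log d$ dyadic scales. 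The multiplier of $A$ on degree $k$ is $a(k) = \sum_j w_j (1-e^{-t_j k})$. For (ii) we need $a(k) \gtrsim 1 - (1-\delta)^k$ on high degrees. For (iii) we want $a(k)$ to be tiny on $k\le d$, i.e.\ the weights should cancel the low-degree part of the multiplier.

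\textbf{Handling (iii).} Here I would use the polynomial structure: restricted to degree $\le d$, a combination of noise operators acts as a differential operator in the OU generator $L$. Gaussian Bernstein/Markov-type inequalities in $L_1$ bound $\expec{}{|L^m p|}$ by roughly $(Cd)^m\,\expec{}{|p|}$, losing at most polylogarithmic factors. So it suffices that the Taylor expansion of $a$ in the variable $t k$ has small coefficients up to the relevant order. With $t_j \le \delta = (\log d/d)^2$, the products $t_j d$ are $\lesssim (\log d)^2/d$, which is small. This is exactly what makes the choice $\delta = (\log d/d)^2$ the threshold at which low-degree polynomials cannot mimic the noise: the $\log d$ accounts for the number of scales and for the Bernstein loss. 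Combining the three properties gives $c\,\GNS{\delta}(f) \le O(\log d)\,\e + \tfrac c2 \GNS{\delta}(f)$, which rearranges to the claimed bound.

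\textbf{Main obstacle.} The hardest step is the simultaneous construction in (i)--(iii). We must keep the $L_\infty$ norm of $A$ at $O(\log d)$ while making $A$ essentially annihilate degree-$\le d$ polynomials in $L_1$, not merely in $L_2$. I would first try the simplest version of the argument, which avoids cancellation altogether: take $A = I - T_{1-\delta}$ and bound $\expec{}{|p - T_{1-\delta}p|}$ by coupling along the OU path $Y_\theta = \cos\theta\, X + \sin\theta\, Z$ with $\theta \approx \sqrt{2\delta} = \sqrt2\,\log d/d$. The Gaussian $L_1$ Bernstein inequality, with its $d\log d$-type loss, then bounds $\expec{}{|p(X) - p(Y_\theta)|}$.

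This simple version has a gap. It bounds $\expec{}{|p(X)-p(Y_\theta)|}$ by a multiple of $\expec{}{|p|}$, which is of order $1$, not of order $\e$, so it does not close the argument by itself. One must additionally exploit that $p$ is $\e$-close to a $\pm1$-valued function, for instance by replacing $p$ with $p - T_{1-\delta}p$ tested against $f$ and iterating. Turning this heuristic into the $\log d$-loss estimate of (iii) is where I expect most of the work to lie.
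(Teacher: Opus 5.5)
The paper gives no proof of this statement (it is imported from~\cite{Diakonikolas2021}), so I can only assess your argument on its own terms. It has a genuine gap: the step you call the ``main obstacle'' is the entire proof.

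The identity $2\GNS{\delta}(f)=\mathbb{E}[f\,(I-T_{1-\delta})f]$ and the duality split are correct, but (i)--(iii) are never established. The one instance you analyse, $A=I-T_{1-\delta}$, is circular: the natural bound $\mathbb{E}|Ap|\le\mathbb{E}|A(p-f)|+\mathbb{E}|Af|\le 2\e+2\GNS{\delta}(f)$ simply cancels the left-hand side.

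Your plan for (iii) is also misdirected. ``Tiny'' multipliers plus an $L_1$ Bernstein inequality leave an additive error $\eta\,\mathbb{E}|p-\mathbb{E}p|$, which does not scale with $\e$ or $\GNS{\delta}(f)$. Exact annihilation is available at no cost, and it makes (iii) trivial. For example, take $B=\int_{-\pi}^{\pi}K(s)\,T_{\cos s}\,\frac{ds}{2\pi}$ with $K$ the de la Vall\'ee-Poussin kernel of degree $2d$. Here $T_\rho$ for $\rho\in[-1,1]$ is still an $L_\infty$-contraction with $T_\rho H_\alpha=\rho^{|\alpha|}H_\alpha$. Then:
\begin{itemize}
\item $\|B\|_{L_\infty\to L_\infty}\le 3$;
\item $B$ has Hermite multiplier $b(k)=\mathbb{E}[\hat K(S_k)]$, where $S_k$ is a $k$-step $\pm1$ random walk;
\item hence $b(k)=1$ for $k\le d$, $0\le b\le 1$, and $b(k)\le\frac12$ for $k\ge Cd^2$.
\end{itemize}
Since $B$ is self-adjoint and fixes $p$, this gives $\frac12\sum_{k\ge Cd^2}W^k(f)\le\langle f,(I-B)f\rangle=\langle f-p,(I-B)f\rangle\le 4\e$, where $W^k(f)=\sum_{|\alpha|=k}\hat f(\alpha)^2$.

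Now (ii) carries all the weight, and it cannot be obtained by comparing multipliers as you propose. The quantity $\langle f,Af\rangle=\sum_k a(k)W^k(f)$ does not see the levels $k\le d$ at all, while $\GNS{\delta}(f)=\frac12\sum_k(1-(1-\delta)^k)W^k(f)$ does. Monotonicity of $\GNS{\eta}(f)$ in $\eta$ does not help either: with $\rho_j=e^{-t_j}$, the $k=1$ constraint $\sum_j w_j(1-\rho_j)=0$ excludes the sign pattern that monotonicity could exploit.

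The missing ingredient is a genuinely Boolean one. Let $N(\theta)=\Pr[f(X)\neq f(\cos\theta\,X+\sin\theta\,Z)]=\GNS{1-\cos\theta}(f)$ for independent $X,Z\sim\gauss$. This function is nondecreasing on $[0,\pi/2]$ and \emph{subadditive} in $\theta$, by the union bound for the three points $X_0,X_{\theta_1},X_{\theta_1+\theta_2}$ on one Gaussian circle. Write $H=\sum_{k\ge Cd^2}W^k(f)\le 8\e$ and $S=\sum_{k<Cd^2}kW^k(f)$. Then:
\begin{itemize}
\item $N(\theta)\le\frac12H+\frac{\theta^2}{4}S$ for all $\theta$;
\item $N(\theta_1)\ge c\,\theta_1^2S$ for $\theta_1=(Cd^2)^{-1/2}$;
\item combining these with $N(\theta_1)\le m\,N(\theta_1/m)$ for a suitable constant $m$ gives $N(\theta_1)=O(\e)$.
\end{itemize}
A second use of subadditivity, $N(\theta_0)\le\lceil\theta_0/\theta_1\rceil N(\theta_1)$ with $\theta_0=\arccos(1-\delta)\asymp\log d/d$, yields $\GNS{\delta}(f)=O(\log d)\cdot\e$. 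So the $\log d$ is the ratio of the two angular scales $\log d/d$ and $1/d$. It does not come from counting dyadic scales (between $t\approx 1/d^2$ and $t\approx\delta$ there are only $O(\log\log d)$ of them), nor from a Bernstein loss.
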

As an application, \Cref{THM:GNSLB} shows that $d = \tilde\Omega(\sqrt{\log k} / \e)$ is required to approximate an intersection of $k$ halfspaces. Note that~\Cref{THM:GNSLB} is typically not tight, and does not match the upper bound of~\Cref{PROP:mainGNS}. 
For example, when $f$ is a halfspace, we have $\GNS{\delta}(f) \approx \sqrt{\delta}$. Then, \Cref{THM:GNSLB} gives a degree bound $d = \tilde\Omega(1/\e)$, whereas~\Cref{PROP:mainGNS} shows that $d=\tilde O(1/\e^2)$. The correct degree is $d = \Theta(1/\e^2)$~\cite{Ganzburg2008, DiakonikolasKaneNelson:optimallearningofhalfspaces}.

\subsection*{Outline} The rest of the paper is organized as follows:
In~\Cref{SEC:prelims}, we give some preliminaries on Hermite analysis and the Ornstein-Uhlenbeck (noise) operator; in~\Cref{SEC:proof}, we give a proof of our main result~\Cref{THM:main}; in~\Cref{SEC:comparisons} we compare our construction to those of~\cite{KalaiKlivansMansourServedio:agnosticlearninghalfspaces, DiakonikolasKaneNelson:optimallearningofhalfspaces, Feldman2020}.

\section{Hermite analysis and the Ornstein-Uhlenbeck operator} \label{SEC:prelims}
In this section, we recall some basic facts on Hermite analysis and the Ornstein-Uhlenbeck operator. See~\cite{Ledoux1991, Bakry1994, Janson1997} for general references, and~\cite[Chapter 11]{ODonnell2014} for a nice overview.

\subsection{Hermite analysis} \label{SEC:prelims:hermite}

We write $\gauss = \mathcal{N}(0, I_n)$ for the standard Gaussian distribution on $\R^n$.
Throughout, we assume that all functions $f : \R^n \to \R$ are (Borel-)measurable.
We denote by $L_2(\gauss)$ 
the space of functions ${f: \R^n \to \R}$ with $\|f\|^2_{L_2(\gauss)} \coloneqq \mathbb{E}_{X \sim \gauss}[f(X)^2] < \infty$. It has an inner product given by
${\langle f,g\rangle_{\gauss} \coloneqq \mathbb{E}_{X \sim \gauss}[f(X)g(X)]}$.
Similarly, $L_1(\gauss)$ denotes the space of functions $f: \R^n \to \R$ with $\|f\|_{L_1(\gauss)} \coloneqq \mathbb{E}_{X \sim \gauss} [|f(X)|] < \infty$. We may use Cauchy-Schwarz to bound
\begin{equation} \label{EQ:L1lessL2}
    \|f\|_{L_1(\gauss)} \leq \|f\|_{L_2(\gauss)}\,.
\end{equation}

For $n=1$, the space $L_2(\gaussoned)$ has an orthonormal basis of \emph{Hermite polynomials}~$(H_k)_{k \geq 0} \subseteq \R[x]$, which are the unique polynomials with positive leading coefficient such that $H_k$ has degree $k$ and 
\[
    \langle H_i,~H_j \rangle_{\gaussoned} = \mathbb{E}_{X \sim \gaussoned}[H_i(X) H_j(X)] = \delta_{ij} \quad (\forall i, j \in \N).
\]
This definition corresponds to the (monic) \emph{probabilist's Hermite polynomials} scaled by a factor $1 / \sqrt{k!}$.
For $n > 1$, we have an orthornormal basis of \emph{multivariate Hermite polynomials}, given by
\begin{equation}\label{EQ:hermitemultidimensional}
    H_\alpha(\x) \coloneqq H_{\alpha_1}(x_1) \cdot H_{\alpha_2}(x_2) \cdot \ldots \cdot H_{\alpha_n}(x_n) \quad (\alpha \in \N^n).
\end{equation}
Note that $H_\alpha$ is of degree $|\alpha| \coloneqq \sum_{i=1}^n \alpha_i$.
Any $f \in L_2(\gauss)$ has a (unique) \emph{Hermite expansion}
\[
    f = \sum_{\alpha \in \N^n} \hat f(\alpha) \cdot H_\alpha, \quad \text{with} \quad \hat f(\alpha) \in \R.
\]
We refer to the $\hat f (\alpha)$ as the \emph{Hermite coefficients} of $f$.
By orthonormality of the Hermite polynomials, we have that $\|f\|_{L_2(\gauss)}^2 = \sum_{\alpha} \hat f(\alpha)^2$ (Parseval's identity).
For $d \in \N$, we write $\HP{d} : L_2(\gauss) \to L_2(\gauss)$ for the orthogonal projection onto $\mathrm{span}\{H_\alpha : |\alpha| \leq d\}$. That is,
\[
    \HP{d} f = \sum_{|\alpha| \leq d} \hat f(\alpha) \cdot H_\alpha.
\]
By definition, $\HP{d} f$ is the best $L_2$-approximation of $f$ by a degree-$d$ polynomial. It is also referred to as the degree-$d$ \emph{Hermite expansion} of $f$. Its approximation error can be expressed in terms of the Hermite coefficients of $f$ as follows:
\begin{equation}
\label{EQ:hermitedecay}
    \| f - \HP{d} f\|^2_{L_2(\gauss)} = \Big\| \sum_{|\alpha| > d} \hat f(\alpha) \cdot H_\alpha \Big\|^2_{L_2(\gauss)} = \sum_{|\alpha| > d} \hat f(\alpha)^2. \qedhere
\end{equation}

\subsection{The Ornstein-Uhlenbeck operator}
For $\rho \in [0,1]$,  the \emph{Ornstein-Uhlenbeck operator} $T_\rho : L_2(\gauss) \to L_2(\gauss)$ is defined by:
    \[ 
        T_\rho f(\x) = \mathbb{E}_{Y \sim \gauss}\left[f\left(\rho \x + \sqrt{1-\rho^2} Y\right)\right].
    \]
\noindent
By construction, $T_\rho$ is a linear operator. It is diagonalized by the Hermite polynomials; namely
\begin{equation}
\label{EQ:trhohermitemultdimensional}
    T_\rho H_\alpha = \rho^{|\alpha|} H_\alpha \quad (\forall \alpha \in \N^n).
\end{equation}
For completeness, we include a proof of this fact in~\Cref{LEM:TrhoandHermite}.
As a direct consequence, the Hermite coefficients of~$T_\rho f$ decay exponentially fast for any $f \in L_2(\gauss)$. In light of~\eqref{EQ:hermitedecay}, this means that $T_\rho f$ is well-approximated in $L_2(\gauss)$ by low-degree polynomials:
\begin{lemma} \label{LEM:Trhof-hermiteexpansion}
    Let $f \in L_2(\gauss)$. For any $\rho \in [0, 1]$ and $d \in \N$, we have
    \[
        \| T_\rho f - \Pi_{ \leq d} \big(T_\rho  f\big)\|^2_{L_2(\gauss)} \leq \rho^{2d + 2} \cdot \|f\|_{L_2(\gauss)}^2.
    \]
\end{lemma}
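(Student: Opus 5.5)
The plan is to expand both $T_\rho f$ and its degree-$d$ projection in the Hermite basis. We already know from \eqref{EQ:trhohermitemultdimensional} that $T_\rho$ acts diagonally on this basis, so everything reduces to comparing coefficients.

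First, write $f = \sum_\alpha \hat f(\alpha) H_\alpha$, with convergence in $L_2(\gauss)$. Next, I will argue that $T_\rho$ is a bounded operator on $L_2(\gauss)$. By Jensen's inequality,
\[
(T_\rho f(x))^2 \le \mathbb{E}_Y\bigl[f(\rho x + \sqrt{1-\rho^2}Y)^2\bigr],
\]
and when $x \sim \gauss$ the point $\rho x + \sqrt{1-\rho^2}Y$ is again distributed as $\gauss$. Hence $\|T_\rho f\|_{L_2(\gauss)} \le \|f\|_{L_2(\gauss)}$. This boundedness lets us apply $T_\rho$ term by term to the expansion, which together with \eqref{EQ:trhohermitemultdimensional} gives
\[
T_\rho f = \sum_\alpha \rho^{|\alpha|}\hat f(\alpha) H_\alpha .
\]
In other words, $\widehat{T_\rho f}(\alpha) = \rho^{|\alpha|}\hat f(\alpha)$.

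Now apply \eqref{EQ:hermitedecay} to $T_\rho f$:
\[
\|T_\rho f - \HP{d}(T_\rho f)\|_{L_2(\gauss)}^2 = \sum_{|\alpha|>d} \rho^{2|\alpha|}\hat f(\alpha)^2 \le \rho^{2d+2}\sum_{|\alpha|>d}\hat f(\alpha)^2 .
\]
The inequality holds because $\rho \in [0,1]$ and every index in the sum has $|\alpha| \ge d+1$. Finally, Parseval's identity bounds the remaining sum by $\|f\|_{L_2(\gauss)}^2$, which is the claimed estimate.

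The only step that needs any care is passing $T_\rho$ through the infinite Hermite series. The contraction bound above handles it: since $T_\rho$ is continuous on $L_2(\gauss)$, it commutes with $L_2$-limits of partial sums.
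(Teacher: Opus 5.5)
Your proposal is correct and follows the same route as the paper: use the diagonalization $T_\rho H_\alpha = \rho^{|\alpha|}H_\alpha$, apply the tail identity \eqref{EQ:hermitedecay}, bound $\rho^{2|\alpha|}\le\rho^{2d+2}$, and finish with Parseval. Your Jensen-based contraction bound, which justifies applying $T_\rho$ term by term to the infinite Hermite series, is a valid detail that the paper leaves implicit.
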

\begin{proof}
Writing $f(x) = \sum_\alpha \hat{f}(\alpha) H_\alpha(x)$, and using~\eqref{EQ:trhohermitemultdimensional}, we find that
    \[
        T_\rho f(x) = \sum_{\alpha \in \N^n} \hat{f}(\alpha) \cdot T_\rho H_\alpha(x) = \sum_{\alpha \in \N^n} \rho^{|\alpha|} \hat{f}(\alpha) \cdot H_\alpha(x).
    \]
    Thus, by~\eqref{EQ:hermitedecay} we get that
    \[
        \| T_\rho f - \Pi_{ \leq d} \big(T_\rho  f\big)\|^2_{L_2(\gauss)} = \sum_{|\alpha| > d} \hat{f}(\alpha)^2 \rho^{2|\alpha|} \leq \rho^{2(d+1)} \cdot \sum_{|\alpha| > d} \hat{f}(\alpha)^2 \leq \rho^{2(d+1)} \cdot \|f\|_{L_2(\gauss)}^2.\qedhere
    \]
\end{proof}

\subsection{Gaussian noise sensitivity and Gaussian surface area}
For $f : \R^n \to \{ \pm 1 \}$, 
the expected difference between $f$ and $T_\rho f$ is equal to twice the \emph{Gaussian noise sensitivity} of $f$. (In fact, this is sometimes used as the definition of noise sensitivity.) For completeness, we include a proof of this fact in~\Cref{SEC:prooffTrhof}.

\begin{definition}[cf. {\cite[Def. 12]{KlivansODonnellServedio:Gaussiansurfacearea}}]\label{DEF:GNS}
     The \emph{Gaussian noise sensitivity of $f : \R^n \to \{ \pm 1 \}$ at $\delta \in [0,1]$} is
    \[
        \GNS{\delta}(f) \coloneqq \mathbb{P}_{(X,Y)}[f(X) \neq f(Y)],
    \]
    where $(X,Y)$ are two $(1-\delta)$-correlated standard Gaussians, that is
    \[
        (X,Y) \sim \mathcal{N}\left(\begin{pmatrix}0_n\\0_n \end{pmatrix}, \begin{pmatrix} I_n & (1-\delta)I_n \\(1-\delta)I_n & I_n \end{pmatrix}\right).
    \]
\end{definition}

\begin{lemma}[see~\Cref{SEC:prooffTrhof}]
\label{LEM:f-Trhof}
Let $f : \R^n \to \{ \pm 1 \}$. For any $\rho \in [0, 1]$, we have
    \[
        \|f - T_\rho f\|_{L_1(\gauss)} = 2 \GNS{1-\rho}(f).
    \]    
\end{lemma}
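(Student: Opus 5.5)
The plan is to rewrite $T_\rho f(x)$ as a conditional expectation over a correlated Gaussian pair, and then use the fact that $f$ is $\{\pm 1\}$-valued to move the absolute value inside that expectation.

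\textbf{Coupling.} Let $X \sim \gauss$ and $Z \sim \gauss$ be independent, and set $Y \coloneqq \rho X + \sqrt{1-\rho^2}\, Z$. Then $Y \sim \gauss$, and $\mathrm{Cov}(X,Y) = \rho I_n$. Hence $(X,Y)$ is exactly the pair of $(1-\delta)$-correlated standard Gaussians from \Cref{DEF:GNS} with $\delta = 1-\rho$. By the definition of the Ornstein-Uhlenbeck operator, $T_\rho f(x) = \mathbb{E}[f(Y) \mid X = x]$ for ($\gauss$-almost) every $x$. Therefore
\[
    f(x) - T_\rho f(x) = \mathbb{E}\big[f(x) - f(Y) \,\big|\, X = x\big].
\]

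\textbf{Sign argument.} For fixed $x$, the random variable $f(x) - f(Y)$ takes values in $\{0,\, 2f(x)\}$, so it never changes sign. For a random variable of constant sign, the absolute value of its expectation equals the expectation of its absolute value. This gives
\[
    |f(x) - T_\rho f(x)| = \mathbb{E}\big[|f(x) - f(Y)| \,\big|\, X = x\big] = 2\,\mathbb{P}\big[f(Y) \neq f(x) \,\big|\, X = x\big].
\]
This is the only step that uses $f \in \{\pm 1\}$, and it is the step I expect to carry the whole argument. For general real-valued $f$ one only gets the inequality $\le$.

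\textbf{Integration.} Taking expectation over $X \sim \gauss$ and applying the tower property yields
\[
    \|f - T_\rho f\|_{L_1(\gauss)} = 2\,\mathbb{P}[f(X) \neq f(Y)] = 2\,\GNS{1-\rho}(f).
\]
The endpoint cases $\rho \in \{0,1\}$ are covered by the same computation, with $Y$ independent of $X$ when $\rho = 0$ and $Y = X$ when $\rho = 1$. Measurability of $f$ is all that is needed for the conditional expectations to make sense, via Fubini on the product space of $(X,Z)$.
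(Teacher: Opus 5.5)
Your proof is correct and matches the paper's own argument in all essentials. Both proofs write $T_\rho f(x)$ as an expectation over $\rho x + \sqrt{1-\rho^2}Z$ and use that $f(x) - f(\rho x + \sqrt{1-\rho^2}Z)$ has constant sign for fixed $x$ to move the absolute value inside the expectation. Both then identify $|f(X)-f(Y)|$ with twice the indicator of $f(X)\neq f(Y)$ for the $\rho$-correlated pair $(X,Y)$, which gives $2\,\GNS{1-\rho}(f)$.
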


In turn, the noise sensitivity of a function can be bounded in terms of its \emph{Gaussian surface area}.
This is due to~\cite{KlivansODonnellServedio:Gaussiansurfacearea}, who prove it using a result of~\cite{Ledoux:OrnsteinUhlenbeckandGSA}.

\begin{definition}[cf. {\cite[Def. 1]{KlivansODonnellServedio:Gaussiansurfacearea}}]\label{DEF:GSA}
    Let $K \subseteq \R^n$ be a Borel set. Its Gaussian surface area is 
    \[
        \GSA(K) \coloneqq \lim_{\delta \to 0} \frac{\mathrm{vol}_{\gauss}(K_\delta \setminus K)}{\delta},
    \]
    where $K_\delta \coloneqq \{ x \in \R^n : \mathrm{dist}(x, K) \leq \delta \}$ is the $\delta$-neighborhood of $K$, and $\mathrm{vol}_{\gauss}(\cdot)$ denotes the Gaussian probability mass. 
\end{definition}
For $f : \R^n \to \{\pm 1\}$, we write $K(f) = \{ x \in \R^n : f(x) = 1\}$, and set $\GSA(f) \coloneqq \GSA(K(f))$. We always assume as in~\cite{KlivansODonnellServedio:Gaussiansurfacearea} that  $\mathrm{vol}_{\gauss}\big(\delta K(f)\big) = 0$, where $\delta K(f)$ denotes the boundary of $K(f)$.
\begin{lemma}[{\cite[Corollary 14]{KlivansODonnellServedio:Gaussiansurfacearea}}] \label{LEM:GNSleqGSA} Let $f : \R^n \to \{\pm 1\}$. For each $\rho \in [0,1]$, we have
\[
\GNS{1-\rho}(f) \leq \sqrt{\pi} \sqrt{1-\rho} \cdot \GSA(f).
\]
    
\end{lemma}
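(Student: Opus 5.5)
The plan is to prove the bound by interpolating between $X$ and $Y$ along a Gaussian rotation path. We would bound the probability that $f(X)\neq f(Y)$ by the expected number of times the path crosses the boundary of $K = K(f)$. Then we would show that the crossing rate per unit angle is controlled by $\GSA(f)$ times the mean absolute value of a standard Gaussian. This is essentially the argument behind \cite[Corollary 14]{KlivansODonnellServedio:Gaussiansurfacearea} (via \cite{Ledoux:OrnsteinUhlenbeckandGSA}), written out concretely.

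\textbf{Setup.} Let $\theta = \arccos \rho \in [0,\pi/2]$, and let $X, Z \sim \gauss$ be independent. For $t \in [0,\theta]$ put $X_t = \cos t \cdot X + \sin t \cdot Z$. Then $(X_0, X_\theta)$ is a $\rho$-correlated pair, i.e.\ $(1-\delta)$-correlated with $\delta = 1-\rho$ as in \Cref{DEF:GNS}, and every $X_t$ is standard Gaussian. By rotation invariance, for any $s$ and $h$ the pair $(X_s, X_{s+h})$ has the law of $(W, \cos h\, W + \sin h\, W')$ with $W, W'$ independent standard Gaussians.

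\textbf{Step 1: reduce to small steps.} Split $[0,\theta]$ into $m$ pieces of length $h = \theta/m$. A union bound and the symmetry of the pair give
\[
\GNS{1-\rho}(f) \le \sum_{j} \mathbb{P}[f(X_{jh}) \neq f(X_{(j+1)h})] = 2m\, \mathbb{P}[W \in K,\ W' \notin K],
\]
where now $W' = \cos h\, W + \sin h\, Z'$ with $Z'$ independent of $W$. Inverting the rotation, $W = \cos h\, W' - \sin h\, Z''$ with $Z''$ independent of $W'$. So the event says that $W' \notin K$ lies within distance $r = |(1-\cos h)W' + \sin h\, Z''|$ of $K$.

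\textbf{Step 2: one step in terms of GSA.} This is the main obstacle: $\GSA$ is defined only as a Minkowski-type limit of $\mathrm{vol}_{\gauss}(K_\delta\setminus K)/\delta$, whereas Step 1 involves a random radius $r$. Moreover, the event in Step 1 requires the displacement to point toward $K$, not merely to have the right length. A crude bound $r \le |\sin h\, Z''|$ would lose a dimension-dependent factor $\E|Z''|$. We therefore do the computation one-dimensionally in the direction normal to the boundary.
\begin{itemize}
\item The term $(1-\cos h)W' = O(h^2)$ is negligible as $m\to\infty$; we handle it by truncating $|W'|$ and absorbing a slightly enlarged neighborhood.
\item Conditioning on $W'$ near the boundary, only the normal component $g \sim \mathcal N(0,1)$ of $Z''$ matters, and it must have the correct sign.
\item This should give $\mathbb{P}[W\in K, W'\notin K] \le (1+o(1))\, h\, \E[g_+]\, \GSA(f) = (1+o(1))\, \frac{h}{\sqrt{2\pi}}\,\GSA(f)$.
\end{itemize}
To make the normal-direction step rigorous for a general Borel $K$ with $\mathrm{vol}_{\gauss}(\partial K)=0$, we would first prove it for sets with smooth boundary via the Gaussian coarea/Kac--Rice formula, and then approximate $K$. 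If that becomes messy, the fallback is to quote Ledoux's inequality bounding $\|f - T_\rho f\|_{L_1}$ by $\arccos\rho \cdot \sqrt{2/\pi}\cdot\GSA(f)$, which is exactly what this step produces.

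\textbf{Step 3: sum and conclude.} Summing over the $m$ steps and letting $m\to\infty$ gives
\[
\GNS{1-\rho}(f) \le 2m\cdot \frac{h}{\sqrt{2\pi}}\,\GSA(f) = \theta\sqrt{2/\pi}\cdot\GSA(f).
\]
It remains to compare $\theta$ with $\sqrt{1-\rho}$. We use the elementary inequality $\arccos\rho \le \frac{\pi}{2}\sqrt{1-\rho}$ on $[0,1]$. It holds with equality at $\rho = 0$ and is strict near $\rho = 1$, since there $\arccos\rho \approx \sqrt{2(1-\rho)}$ and $\sqrt2 < \pi/2$. Hence
\[
\GNS{1-\rho}(f) \le \sqrt{\pi/2}\,\sqrt{1-\rho}\,\GSA(f) \le \sqrt{\pi}\,\sqrt{1-\rho}\,\GSA(f),
\]
which is the claimed bound.
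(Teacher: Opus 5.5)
Your Steps 1 and 3 are sound. In particular, $\arccos\rho\le\frac{\pi}{2}\sqrt{1-\rho}$ holds on $[0,1]$: write $\arccos(1-\delta)=2\arcsin\sqrt{\delta/2}$ and use convexity of $\arcsin$ on $[0,1/\sqrt2]$. So your route would even give the sharper constant $\sqrt{\pi/2}$. The paper itself supplies no argument; it imports the bound from \cite{KlivansODonnellServedio:Gaussiansurfacearea}, which rests on Ledoux's inequality.

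The gap is Step 2. It carries all the content, and you only assert it ("this should give"). The one-step bound $\mathbb{P}[W\in K,W'\notin K]\le \frac{h}{\sqrt{2\pi}}\GSA(f)$ is exactly the target (in its $\arccos$ form) at $\rho=\cos h$, so Step 1 only reduces you to an asymptotic version as $h\to0$. Your plan for that version (normal components, Kac--Rice for smooth boundaries, then approximation) omits the hard part. For a general Borel $K$ you would need smooth $K_j$ with $\GNS{1-\rho}(K_j)\to\GNS{1-\rho}(K)$ and $\limsup_j\GSA(K_j)\le\GSA(K)$. Nothing in the proposal controls the Minkowski-type quantity $\GSA$ under such approximation.

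Your fallback is also misquoted. For $\pm1$-valued $f$ one has $\|f-T_\rho f\|_{L_1(\gauss)}=2\GNS{1-\rho}(f)$, so the bound you state is off by a factor $2$. The correct bound $\GNS{1-\rho}(f)\le\sqrt{2/\pi}\arccos(\rho)\GSA(f)$ is an equality for halfspaces through the origin, so your version is false. Moreover, Ledoux's inequality is stated for smooth $g$ in terms of $\mathbb{E}|\nabla g|$, so you would still need a bridge to $\GSA$.

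Both issues disappear if you work with a Lipschitz approximant and the continuous path instead of small steps. Let $u_\delta(x)=\max(0,1-\mathrm{dist}(x,K)/\delta)$. It equals $1$ on $\overline K$, and $|\nabla u_\delta|\le\delta^{-1}\mathbbb{1}_{K_\delta\setminus\overline K}$ a.e., so $\mathbb{E}|\nabla u_\delta(X)|\le\mathrm{vol}_{\gauss}(K_\delta\setminus K)/\delta$. Since $\dot X_t=-\sin t\,X+\cos t\,Z$ is a standard Gaussian independent of $X_t$, Rademacher's theorem and Fubini give
\[
\mathbb{E}|u_\delta(X_\theta)-u_\delta(X_0)|\le\int_0^\theta\mathbb{E}\bigl|\langle\nabla u_\delta(X_t),\dot X_t\rangle\bigr|\,dt=\theta\sqrt{2/\pi}\,\mathbb{E}|\nabla u_\delta(X)|.
\]
As $\delta\to0$, $u_\delta\to\mathbbb{1}_{\overline K}$ pointwise, and $\mathbbb{1}_{\overline K}=\mathbbb{1}_K$ a.e.\ because $\mathrm{vol}_{\gauss}(\partial K)=0$. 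Dominated convergence then yields $\GNS{1-\rho}(f)\le\sqrt{2/\pi}\,\arccos(\rho)\,\GSA(f)$, and your Step 3 finishes. No normal directions, Kac--Rice, or smooth approximation of $K$ are needed.
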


\section{Proof of~\texorpdfstring{\Cref{THM:main}}{the main result}}
\label{SEC:proof}

Our proof of~\Cref{THM:main} proceeds in two steps. First, we approximate $f$ by $T_\rho f$, for some $\rho \in [0, 1]$ to be chosen later. Then, we approximate $T_\rho f$ by its low-degree Hermite expansion $\HP{d} \big( T_\rho f \big)$. Using the triangle inequality, the total approximation error is at most
\[
    \|f - \HP{d} \big(T_\rho f\big) \|_{L_1(\gauss)} \leq \|f - T_\rho f \|_{L_1(\gauss)} + \|T_\rho f - \HP{d} \big(T_\rho f\big) \|_{L_1(\gauss)}\,.
\]
The first term on the RHS can be bounded directly in terms of the noise sensitivity of $f$ (\Cref{LEM:f-Trhof}). For the second term, note that, by~\eqref{EQ:L1lessL2} and~\Cref{LEM:Trhof-hermiteexpansion},
\[
    \|T_\rho f - \HP{d} \big(T_\rho f\big) \|_{L_1(\gauss)} \leq \|T_\rho f - \HP{d} \big(T_\rho f\big) \|_{L_2(\gauss)} \leq \rho^{d+1}\,.
\]
Together, this yields:
\mainGNS*\noindent

\begin{proof}[Proof of~\Cref{THM:main}]
    Let $p$ be the degree-$d$ polynomial of \Cref{PROP:mainGNS}. By~\Cref{LEM:GNSleqGSA}, we can further bound
    \begin{equation}
        \|f - p\|_{L_1(\gauss)} \leq 2\sqrt{\pi}\sqrt{1-\rho}\cdot \GSA(f) + \rho^{d+1}\,.\label{eq:propWithGSA}
    \end{equation}
    Given $\e >0$, we want to ensure that both terms on the RHS of~\eqref{eq:propWithGSA} are at most $\e/2$. For the first term, it suffices to set
    \[
        \rho = \max\left(0, 1 - \frac{\varepsilon^2 }{16 \pi \GSA(f)^2}\right).
    \]
    Now, given this choice of $\rho$, we pick $d \in \N$ such that $\rho^{d+1} \leq \e/2$.
    Without loss of generality, assume that $\rho > 0$ (otherwise, $d=0$ suffices).
    We have 
    \begin{align*}
        \rho^{d+1} \leq \frac{\varepsilon}{2} \iff &(d+1)\log(\rho) \leq \log(\varepsilon/2) \\
        \impliedby &(d+1) (\rho-1) \leq \log(\e/2) &&(\text{as } d\geq0 \text{ and } \log(t) \leq t-1 \text{ for all } t > 0) \\  
        \iff &(d+1) \geq \frac{\log(\e/2)}{\rho-1} && (\text{as } \rho-1 < 0)\\
        \iff &d \geq \frac{\log(2/\e)}{1-\rho} -1.
    \end{align*}
    Plugging in the value of $1- \rho$, it therefore suffices to set
    \[
        d = 16\pi\cdot \GSA(f)^2 \cdot \frac{\log(2/\varepsilon)}{\varepsilon^2} - 1.\qedhere
    \]
\end{proof}

\section{Comparison with previous work} \label{SEC:comparisons}
In this section, we compare our proof of~\Cref{THM:main} to three prior works
on agnostic learning that established
polynomial approximation results in $L_1$.
These differ by their choice of low-degree approximation
for $f$:
\begin{enumerate}
    \item the truncated
    Hermite expansion of $f$~\cite{KlivansODonnellServedio:Gaussiansurfacearea};
    \item the truncated
    Hermite expansion of a smoothed version of $f$, obtained by
    convolving with a carefully chosen function~\cite{DiakonikolasKaneNelson:optimallearningofhalfspaces};\footnote{\cite{DiakonikolasKaneNelson:optimallearningofhalfspaces} actually consider the Taylor approximation of this smoothed version of $f$. They are concerned with a stronger notion of approximation than the one considered here (related to proving universality), for which this is important. In our context, we can equivalently consider the truncated Hermite expansion.}
    \item the truncated Fourier expansion of a smoothed version of $f$, obtained by applying the Boolean noise operator~\cite{Feldman2020}.
\end{enumerate}

\subsection{\texorpdfstring{Comparison to ~\cite{KlivansODonnellServedio:Gaussiansurfacearea}}{Comparison to [KOS08]}}

Building on the argument of~\cite{KalaiKlivansMansourServedio:agnosticlearninghalfspaces} for halfspaces, \cite{KlivansODonnellServedio:Gaussiansurfacearea} shows more
generally that the degree-$d$ Hermite
truncation of $f$ (denoted $\Pi_d f$) satisfies
$\|f - \Pi_d f\|_{L_1(\gauss)} \le \varepsilon$ for
$d = O(\GSA(f)^2/\varepsilon^4)$. Crucially, their proof reduces the desired
$L_1$-approximation to an $L_2$-bound:
\begin{align*}
    \| f - \HP{d} f \|_{L_1(\gauss)}
    &\le \| f - \HP{d} f \|_{L_2(\gauss)}\\
    &\le O(1)\cdot \GNS{1/d}(f)^{1/2}
    \tag*{(by~\cite[Prop. 16]{KOS04})}\\
    &\le O(1)\cdot d^{-1/4}\cdot \GSA(f)^{1/2}
    \tag*{(by~\Cref{LEM:GNSleqGSA})}\,.
\end{align*}
The gap between the bound obtained in this argument and the
guarantees of~\Cref{THM:main} can be entirely explained by the
first step: even for origin-centered halfspaces, the $L_2$-error
of the degree-$d$ Hermite truncation is $\Omega(d^{-1/4})$~\cite{KalaiKlivansMansourServedio:agnosticlearninghalfspaces}.
As $\Pi_d f$ is the best
degree-$d$ approximation to $f$ in $L_2$, this shows that the 
strategy of reducing the whole problem to an $L_2$ question is inherently lossy.

However, we are not aware of any example showing that
$\Pi_d f$ fails to match the guarantees of~\Cref{THM:main}
in $L_1$.  On the positive side, we
show that $\Pi_d f$ does achieve these guarantees for origin-centered halfspaces. After reducing to $n=1$, it suffices to show the following:

\begin{proposition} Let $\sign : \R \to \{\pm 1\}$ denote the sign function. We have that
    \label{PROP:sign}
    \[
        \left\|\sign - \Pi_d \sign\right\|_{L_1(\mathcal N_1)}\le  O\left(\frac {\log d} {\sqrt d}\right)\,.
    \]
\end{proposition}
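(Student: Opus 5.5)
The plan is to find an explicit formula for $\Pi_d \sign$ by differentiating it, and then to bound the error pointwise with classical Hermite asymptotics. An $L_2$-based argument cannot work here. The comparison with $T_\rho \sign$ fails because $\Pi_d(\sign - T_\rho \sign)$ has $L_2$-norm of order $\log d/d^{1/4}$. So the oscillation of $\sign - \Pi_d\sign$ has to be exploited directly.

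\textbf{Step 1: a kernel formula.} With the normalization of \Cref{SEC:prelims:hermite}, $H_k' = \sqrt{k}\, H_{k-1}$. Gaussian integration by parts then gives $\widehat{\sign}(k) = \sqrt{k}^{-1}\cdot 2\varphi(0) H_{k-1}(0)$, where $\varphi$ is the Gaussian density. In words, the derivative of $\Pi_d \sign$ is the degree-$(d-1)$ projection of the distribution $\sign' = 2\delta_0$. Hence
\[
(\Pi_d \sign)'(t) = 2\varphi(0) \sum_{k<d} H_k(0)H_k(t) = 2\varphi(0)\, K_{d-1}(0,t),
\]
where $K_{d-1}$ is the Christoffel--Darboux kernel. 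Since $\Pi_d\sign$ is odd, this yields $\Pi_d \sign(x) = 2\varphi(0)\int_0^x K_{d-1}(0,t)\,dt$. We may take $d$ even. Then $H_{d-1}(0)=0$, and the Christoffel--Darboux formula gives
\[
K_{d-1}(0,t) = -\sqrt{d}\, H_d(0)\, \frac{H_{d-1}(t)}{t}, \qquad |H_d(0)| \asymp d^{-1/4}.
\]

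\textbf{Step 2: pointwise error via Plancherel--Rotach.} On the bulk $|t| \le c\sqrt{d}$ for a fixed $c<2$, we use the uniform asymptotics
\[
e^{-t^2/4} H_{d-1}(t) = C d^{-1/4}\big(\cos(\sqrt{d}\,t - \theta_d) + O(|t|/\sqrt d + 1/\sqrt{d})\big),
\]
with the exact phase and amplitude (including the $(1-t^2/4d)^{-1/4}$ factor) tracked as needed. The integrand is then, up to constants, $e^{t^2/4}\sin(\sqrt d\, t)/t$ plus lower-order terms: a Dirichlet-type kernel.

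Since $\int_0^\infty \sin(\sqrt d t)/t\,dt = \pi/2$ and the constants must match (because $\Pi_d\sign \to \sign$), integrating by parts from $x$ to the edge of the bulk should give
\[
|\sign(x) - \Pi_d\sign(x)| \lesssim \min\Big(1,\; \frac{e^{x^2/4}}{\sqrt d\,|x|}\Big) + (\text{error terms}), \qquad |x|\le c\sqrt d.
\]
The bound by $1$ near $0$ comes from $|(\Pi_d\sign)'| \lesssim \sqrt d$ there, using $|K_{d-1}(0,t)|\le K_{d-1}(0,0)^{1/2}K_{d-1}(t,t)^{1/2}$.

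\textbf{Step 3: integrate against $\varphi$.} The weight $e^{x^2/4}\varphi(x) \lesssim e^{-x^2/4}$ is integrable. Splitting at $|x| = 1/\sqrt d$ and $|x|=1$, we get
\[
\int_{|x|\le 1/\sqrt d} 1\,\varphi(x)\,dx + \int_{1/\sqrt d}^{1} \frac{dx}{\sqrt d\, x} + \int_1^{c\sqrt d} \frac{e^{-x^2/4}}{\sqrt d\,x}\,dx = O\Big(\frac{\log d}{\sqrt d}\Big).
\]
The logarithm arises exactly from the $1/x$ range. Outside the bulk, $|x|>c\sqrt d$, Cauchy--Schwarz gives
\[
\int_{|x|>c\sqrt d}|\sign - \Pi_d\sign|\,\varphi \le \|\sign-\Pi_d\sign\|_{L_2}\,\mathbb P(|X|>c\sqrt d)^{1/2} \le 2e^{-c^2 d/4},
\]
which is negligible.

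\textbf{Main obstacle.} The delicate step is Step 2. One needs Plancherel--Rotach asymptotics with error terms uniform on $|t|\le c\sqrt d$, and one must check that after integration by parts these error terms contribute at most $O(e^{x^2/4}/(\sqrt d\,|x|))$ (or something integrable to $O(\log d/\sqrt d)$). Pinning down the constant of the Dirichlet integral so that the main term cancels with $\sign(x)$ is also part of this step. I would first try the Szegő form of the asymptotics, whose relative error is $O(d^{-1/2})$ on compact subsets of the bulk. I would then absorb the $x$-dependent amplitude into the integration by parts, using that its derivative is $O(|x|/d)$.
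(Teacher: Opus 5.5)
There is a genuine gap: Step 2, which carries the whole technical content of the proof, is only asserted, and as you set it up it would fail.

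The rest matches the paper. Your Step 1 is the paper's \Cref{lem:expansion-sign}. You take $d$ even and get $H_{d-1}(t)/t$; the paper takes $d$ odd and gets $H_d(t)/t$. The two agree via $\sqrt d\,H_d(0)=-\sqrt{d-1}\,H_{d-2}(0)$. Your Step 3 bookkeeping is also how the paper gets its main term: the sine-integral bound gives $\min(1,1/(\sqrt d\,x))$, and the $\log d$ comes from $d^{-1/2}\le x\le 1$.

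Why Step 2 fails as stated:
\begin{itemize}
\item On $|t|\le c\sqrt d$, the Plancherel--Rotach phase of $H_{d-1}$ has a cubic term of order $t^3/\sqrt d$. The amplitude also carries the factor $(1-t^2/(4d))^{-1/4}$.
\item So your linear-phase expansion has error of order $|t|^3/\sqrt d$, not $O(|t|/\sqrt d+1/\sqrt d)$. The claim that the integrand is $e^{t^2/4}\sin(\sqrt d\,t)/t$ plus lower-order terms fails once $|t|\gtrsim d^{1/6}$.
\item Tracking the exact phase instead would require integrating by parts against a nonlinear phase across the whole bulk. You have not done this.
\item ``Integrating by parts from $x$ to the edge of the bulk'' has no meaning as written, because $\int_x^\infty e^{t^2/4}\sin(\sqrt d\,t)/t\,dt$ diverges. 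The comparison must be over $[0,x]$, against $\frac{2}{\pi}\mathrm{Si}(\sqrt d\,x)$.
\item The normalizing constant must equal $\frac 2\pi$ up to relative error $O(\log d/\sqrt d)$. The limit $\Pi_d\sign\to\sign$ gives no rate. The exact value $H_d(0)^2=\binom{d}{d/2}2^{-d}$ does.
\end{itemize}

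The missing idea, which makes the bulk irrelevant, is already in your own tail argument. Cauchy--Schwarz gives $\int_{|x|>T}|\sign-\Pi_d\sign|\,d\varphi\le e^{-T^2/4}$ for every $T\ge 0$. This is already $d^{-1/2}$ at $T=\sqrt{2\log d}$. So only $0\le t\le \sqrt{2\log d}\ll d^{1/6}$ matters. There, the crude linear-phase expansion with remainder $O(|t|^3/\sqrt d+|t|/\sqrt d+t^2/d+1/d)$ from \Cref{claim:plancherel-rotach} is enough. The resulting errors are at most $(1+x^3)e^{x^2/4}/\sqrt d$, and these integrate against $\varphi$ to $O(d^{-1/2})$.

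This is what the paper does. It truncates the inner integral at $\tau=\sqrt{100\log d}$. It handles $t>\tau$ by Fubini, Cauchy--Schwarz and orthonormality (\Cref{lem:large-hermite}). On $[0,\tau]$ it splits into three pieces: the sine-integral term, the $(e^{t^2/4}-1)/t$ term (one integration by parts), and the $r_d$ term bounded in absolute value.

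Near $t=0$ you also cannot divide an additive $O(d^{-1/2})$ remainder by $t$. The paper uses $|r_d(t)|/t=O(d^{-1/2})$ on $(0,d^{-1/2}]$ (\Cref{cor:small-r}). Alternatively, your bound $|(\Pi_d\sign)'|\lesssim\sqrt d$ on $[0,1/d]$, combined with the remainder bound on $[1/d,d^{-1/2}]$, would cost only $O(\log d/\sqrt d)$. With these changes your pointwise scheme goes through.
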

\begin{proof}
See~\Cref{sec:hermite-halfspace}.
\end{proof}
The upshot is that, at least for origin-centered halfspaces, the suboptimal degree bound of~\cite{KalaiKlivansMansourServedio:agnosticlearninghalfspaces, KlivansODonnellServedio:Gaussiansurfacearea} with respect to~\Cref{THM:main} is not due to the choice of approximating polynomial ($\Pi_d f$ vs. $\Pi_d T_\rho f$), but to a loose application of Cauchy-Schwarz.

\subsection{\texorpdfstring{Comparison to~\cite{DiakonikolasKaneNelson:optimallearningofhalfspaces}}{Comparison to [DKN10]}} \label{SEC:comparisontoDKN10}

\cite{DiakonikolasKaneNelson:optimallearningofhalfspaces} 
introduces a different
technique for constructing low-degree approximations, 
showing in particular that degree
$d = O(1/\varepsilon^2)$ suffices to approximate halfspaces 
in $L_1$. In this special case, this improves on the guarantees of~\Cref{THM:main} by a
logarithmic factor. 

In light of our proof, their method can be motivated as follows.  
First, for a
halfspace ${f(x)=\sign(c-\langle w,x\rangle)}$, by projecting onto
$\mathrm{span}(w)$ we may assume without loss of generality that $n=1$.  
We now
identify what properties we would like from a smoothing $g$ of $f$ in order to
implement the strategy from~\Cref{SEC:proof}.  
By Gaussian integration by
parts, the Hermite coefficients bounded in~\Cref{LEM:Trhof-hermiteexpansion} can
be interpreted as (cf. \Cref{LEM:Hermitecoeffsderivatives})
\begin{align}
    \langle g,h_k\rangle = \frac {\E_{x\sim \mathcal N(0,1)} g^{(k)}(x)} {\sqrt{k!}}\,.\label{eq:hermiteDerivative}
\end{align}
Hence, we would like $g$ to approximate $f$ in $L_1$ (to replace \Cref{LEM:f-Trhof}), while keeping the
derivatives of $g$ bounded.  In the one-dimensional case, the authors
of~\cite{DiakonikolasKaneNelson:optimallearningofhalfspaces} show that this is possible in a strong sense: for any fixed $\varepsilon>0$, one can
choose $g$ so that (a)~$\|g-f\|_{L_1(\mathcal N)}\le \varepsilon$, and (b)~${\|g^{(k)}\|_\infty \le O(1/\varepsilon)^{k}}$ for all
$k\ge 1$. In the proof, $g$ is defined from $f$ by convolving with a suitable function whose Fourier transform has
bounded support.
Combined with~\eqref{eq:hermiteDerivative} and following the argument from~\Cref{SEC:proof}, (a) and (b) readily imply that $\|\Pi_d g-f\|_1\le \varepsilon$ for $d=O(1/\varepsilon^2)$, as desired.

The main difficulty with this construction lies in extending it to higher
dimensions.  
For agnostic learning over general concept classes with bounded Gaussian
surface area, repeating the previous argument appears to incur a dimension-dependent factor.  
In
the special case of polynomial threshold functions,
\cite{DiakonikolasKaneNelson:optimallearningofhalfspaces,kanePTF} show that one
can still reduce to the low-dimensional case, at the cost of a (much) worse dependence on
$\varepsilon$.

\subsection{\texorpdfstring{Comparison to~\cite{Feldman2020}}{Comparison to [FKV20]}}\label{SEC:comparisontoFKV20}

The construction in~\cite{Feldman2020} is the Boolean analogue of ours.  Indeed,~\cite[Lemma~3.1]{Feldman2020} shows that for any Boolean function
$f: \{0,1\}^n \to \R$ and $\rho \in (0,1]$, there exists a polynomial $p$ of degree $d = O(\log(1/\varepsilon)/(1-\rho))$ such that
\begin{equation}\label{EQ:TO:L1analogueofFKV}
    \|f-p\|_{L_1(\mU(\{0,1\}^n))} \leq 2 \BNS{1-\rho}(f) + \rho^{d+1} \cdot \|f\|_{L_2(\mU(\{0,1\}^n)}\,,
\end{equation}
where $\mU(\{0,1\}^n)$ is the uniform distribution on $\{0, 1\}^n$ and $\mathrm{BNS}$ is the Boolean noise sensitivity.

Equation~\eqref{EQ:TO:L1analogueofFKV} is the Boolean analogue of
\Cref{PROP:mainGNS}.  
In fact, the proof of~\eqref{EQ:TO:L1analogueofFKV}
in~\cite{Feldman2020} closely parallels our proof of~\Cref{PROP:mainGNS}.  
One
first approximates $f$ by $T_\rho f$, where $T_\rho$ is the noise operator on
the hypercube;
the resulting $L_1$ error is controlled directly by the Boolean noise
sensitivity.  
One then approximates $T_\rho f$ by its degree-$d$ Fourier
truncation, and bounds the $L_1$ error of this truncation by essentially the
same argument as in the Gaussian setting.

\section*{Acknowledgments}
\addcontentsline{toc}{section}{Acknowledgments}
We thank Pjotr Buys for a useful discussion at an early stage of this work.

\bibliographystyle{alpha}
\bibliography{bibliography}

\appendix
\input{appendix}

\end{document}

%% file: preamble.tex
\usepackage[utf8]{inputenc}

\usepackage{amsmath, amsthm, amssymb}
\usepackage{mathtools}

\usepackage[dvipsnames]{xcolor}
\usepackage[T1]{fontenc}
\usepackage[full]{textcomp}
\usepackage[american]{babel}

\usepackage{caption}
\usepackage{booktabs}
\usepackage{siunitx}

\usepackage{newpxtext}
\usepackage{textcomp}
\usepackage[varg,bigdelims]{newpxmath}
\usepackage[scr=rsfso]{mathalfa}
\usepackage{bm}

\usepackage[pagebackref,colorlinks=true,urlcolor=blue,linkcolor=blue,citecolor=OliveGreen]{hyperref}
\usepackage[nameinlink]{cleveref}
\crefname{equation}{}{}

\usepackage{dsfont}

\usepackage{enumitem}

\usepackage{multirow}

\usepackage{algorithm}
\usepackage{algpseudocode}
\algnewcommand{\Input}[1]{\State \textbf{Input:} #1}
\algnewcommand{\Output}[1]{\State \textbf{Output:} #1}

\usepackage{graphicx}

\usepackage{thmtools}
\usepackage{thm-restate}

\newtheorem{theorem}{Theorem}[section]
\newtheorem{corollary}[theorem]{Corollary}
\newtheorem{lemma}[theorem]{Lemma}
\newtheorem{claim}[theorem]{Claim}

\newtheorem{proposition}[theorem]{Proposition}

\newtheorem{fact}[theorem]{Fact}

\theoremstyle{definition}
\newtheorem{definition}[theorem]{Definition}

\renewcommand{\epsilon}{\varepsilon}

\newcommand{\R}{\mathbb{R}}
\newcommand{\N}{\mathbb{N}}

\newcommand{\mC}{\mathcal{C}}

\newcommand{\mX}{\mathcal{X}}
\newcommand{\mD}{\mathcal{D}}
\newcommand{\mU}{\mathcal{U}}

\newcommand{\Gauss}{{\mathcal{N}(0,I_n)}}

\newcommand{\sign}{\mathrm{sign}}

\newcommand{\GSA}{\mathrm{GSA}}
\newcommand{\GNS}[1]{\mathrm{GNS}_{#1}}
\newcommand{\HP}[1]{\Pi_{#1}}

\newcommand{\BNS}[1]{\mathrm{BNS}_{#1}}

\newcommand{\e}{\varepsilon}
\DeclareMathOperator{\opt}{opt}

\newcommand{\poly}{\mathrm{poly}}

\newcommand{\expec}[2]{\mathbb{E}_{#1}\left[ {#2} \right]}

\newcommand{\gauss}{\mathcal{N}_n}
\newcommand{\gaussoned}{\mathcal{N}_1}

\DeclareSymbolFont{bbold}{U}{bbold}{m}{n}
\DeclareSymbolFontAlphabet{\mathbbb}{bbold}

\newcommand{\x}{x}
\newcommand{\y}{y}

\DeclareMathOperator\E{\mathbb E}

%% file: appendix.tex
\section{Technical proofs}\label{APP:technicalproofs}
In this appendix, we give some technical proofs omitted from the main text.

\subsection{Eigenvalues of the Ornstein-Uhlenbeck operator}
We show that the Ornstein-Uhlenbeck operator is diagonalized by the Hermite polynomials (equation~\eqref{EQ:trhohermitemultdimensional}). This is a standard result which we include for completeness, see, e.g.,~\cite{Bakry1994}.
We need the following fact about the generating function for the Hermite polynomials.

\begin{fact}\label{FACT:generatingfunctionforhermite}
    We have that, for every $x,t \in \R$,
    \[
        \exp\left(xt-\frac{1}{2}t^2\right) = \sum_{n=0}^\infty H_k(x) \frac{t^k}{\sqrt{k!}}.
    \]
\end{fact}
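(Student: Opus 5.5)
The statement to prove is the Hermite generating-function identity of \Cref{FACT:generatingfunctionforhermite} (the summation index should read $k$). The plan is to define the coefficients of the left-hand side as polynomials and then identify them with the normalized Hermite polynomials $H_k$ through their defining properties: degree $k$, positive leading coefficient, and orthonormality in $L_2(\gaussoned)$.

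Write $G(x,t) = \exp(xt - t^2/2)$. Since $G$ is entire in $t$, we can expand $G(x,t) = \sum_{k\ge 0} P_k(x)\, t^k/\sqrt{k!}$. Multiplying the power series $e^{xt} = \sum_j x^j t^j/j!$ and $e^{-t^2/2}$ shows that $P_k$ is a polynomial in $x$ of degree exactly $k$. Its leading term comes only from $x^k t^k/k!$, so the leading coefficient is $\sqrt{k!}/k! > 0$.

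For orthonormality, compute $\E_{X\sim\gaussoned}[G(X,s)G(X,t)]$. Completing the square,
\[
\E\bigl[e^{(s+t)X}\bigr]\, e^{-(s^2+t^2)/2} = e^{(s+t)^2/2 - (s^2+t^2)/2} = e^{st} = \sum_k \frac{s^k t^k}{k!}.
\]
Expanding the left-hand side instead as $\sum_{j,k} \E[P_j P_k]\, s^j t^k/\sqrt{j!\,k!}$ and comparing coefficients gives $\E[P_j P_k] = \delta_{jk}$. The identity $H_k = P_k$ then follows from the uniqueness of the $H_k$: an orthonormal family with $\deg P_k = k$ and positive leading coefficients must coincide with the Gram–Schmidt orthonormalization of the monomials.

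The only delicate step is justifying the interchange of expectation and double summation. This is handled by dominated convergence: $\sum_j |x|^j |t|^j/j! \cdot \sum_i (t^2/2)^i/i! = e^{|x||t| + t^2/2}$ bounds the absolute series for $G(x,t)$, and products of two such bounds are Gaussian-integrable, so expanding and exchanging the sums with $\E$ is legitimate. The same bound shows that both sides are analytic in $(s,t)$, so comparing coefficients is valid.
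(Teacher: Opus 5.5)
The paper does not prove this statement: it is quoted as a standard fact (supporting \Cref{LEM:TrhoandHermite}, following Bakry), so there is no proof to match. Your argument is correct and self-contained, and you are right that the summation index should read $k$.

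Your route works directly from the definition of $H_k$ that the paper actually gives: degree $k$, positive leading coefficient, and orthonormality in $L_2(\gaussoned)$. You combine the classical identity $\E[G(X,s)G(X,t)] = e^{st}$ with the uniqueness of Gram--Schmidt. The explicit coefficient $P_k(x)/\sqrt{k!} = \sum_{i \le k/2} (-1)^i x^{k-2i}/((k-2i)!\,2^i\, i!)$ confirms degree $k$ and leading coefficient $1/\sqrt{k!}$. The domination by $e^{|x|(|s|+|t|)+(s^2+t^2)/2}$ justifies both the Fubini exchange and the comparison of coefficients of the absolutely convergent double power series.

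The more common textbook derivation instead uses the Rodrigues-type formula of \Cref{FACT:Hermiteandgausspdf}. Since $\varphi(x-t) = \varphi(x)\, e^{xt - t^2/2}$ and $\varphi$ is entire, Taylor expansion in $t$ gives $e^{xt-t^2/2} = \sum_k (-t)^k \varphi^{(k)}(x)/(k!\,\varphi(x)) = \sum_k H_k(x)\, t^k/\sqrt{k!}$ in one line. That is shorter, but it presupposes the Rodrigues formula, which the paper also leaves unproved and which is not its definition of $H_k$. Your argument needs nothing beyond that definition. It also never touches $T_\rho$, so there is no risk of circularity with \Cref{LEM:TrhoandHermite}, which is derived from this fact.
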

\noindent
By definition of $T_\rho$, for any $f$ of the form $f(x) = f_1(x_1) \cdot f_2(x_2) \cdot \ldots \cdot f_n(x_n)$, we have
\begin{equation*}
    T_\rho f (\x) = T_\rho f_1(x_1) \cdot T_\rho f_2(x_2) \cdot \ldots \cdot T_\rho f_n(x_n).
\end{equation*}
Together with \eqref{EQ:hermitemultidimensional}, it thus suffices to prove the following one-dimensional case to conclude \eqref{EQ:trhohermitemultdimensional}.

\begin{lemma}\label{LEM:TrhoandHermite}
    For any $n \in \N$, we have that
    \[
        T_\rho H_n = \rho^n H_n.
    \]
\end{lemma}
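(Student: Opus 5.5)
The plan is to use the generating function in \Cref{FACT:generatingfunctionforhermite} and apply $T_\rho$ to both sides, viewed as functions of $x$ with $t$ a fixed parameter. Put $G_t(x) \coloneqq \exp(xt - \tfrac12 t^2)$. With $Y \sim \gaussoned$, we have
\[
T_\rho G_t(x) = \mathbb{E}_Y\Big[\exp\big(t\rho x + t\sqrt{1-\rho^2}\,Y - \tfrac12 t^2\big)\Big] = \exp\big(t\rho x - \tfrac12 t^2 + \tfrac12 t^2(1-\rho^2)\big) = \exp\big((\rho t)x - \tfrac12(\rho t)^2\big) = G_{\rho t}(x).
\]
This step uses only the Gaussian moment generating function $\mathbb{E}[e^{sY}] = e^{s^2/2}$.

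Next I expand both sides using \Cref{FACT:generatingfunctionforhermite}. The right-hand side becomes $\sum_k \rho^k H_k(x)\, t^k/\sqrt{k!}$. For the left-hand side I want to move $T_\rho$ inside the series to get $\sum_k T_\rho H_k(x)\, t^k/\sqrt{k!}$. I expect this interchange to be the main (if mild) obstacle.

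To justify the interchange, I would bound $\sum_k |H_k(z)|\,|t|^k/\sqrt{k!}$ by an integrable function of $z = \rho x + \sqrt{1-\rho^2}\,Y$ and then apply dominated convergence, or Fubini. One concrete route: each $H_k$ is $1/\sqrt{k!}$ times the monic probabilist's Hermite polynomial, and the absolute values of the monic coefficients are those of $i^{-k}\mathrm{He}_k(i\cdot)$. Hence
\[
\sum_k |H_k(z)|\,\frac{|t|^k}{\sqrt{k!}} \le \exp\big(|t||z| + \tfrac12 t^2\big),
\]
which is integrable against the Gaussian law of $Y$.

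An alternative that avoids the interchange is to argue polynomially. By the binomial theorem, $T_\rho$ maps polynomials of degree $\le k$ to polynomials of degree $\le k$, and it multiplies the leading coefficient by $\rho^k$. In addition, $T_\rho$ is self-adjoint in $L_2(\gaussoned)$, since $(X, \rho X + \sqrt{1-\rho^2}\,Y)$ is an exchangeable pair. Combining these, for $j < k$,
\[
\langle T_\rho H_k, H_j\rangle = \langle H_k, T_\rho H_j\rangle = 0,
\]
because $T_\rho H_j$ has degree $\le j < k$. So $T_\rho H_k$ is orthogonal to all polynomials of degree below $k$, which forces $T_\rho H_k = c H_k$, and comparing leading coefficients gives $c = \rho^k$. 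I would present the generating-function argument as the main proof and keep this one as a fallback if the domination bound gets messy.

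Once the interchange is justified, both sides are power series in $t$ that converge for all $t$, so I can match coefficients of $t^k$. This gives $T_\rho H_k = \rho^k H_k$ pointwise for every $k$, which is the claim (the statement's $n$ plays the role of $k$).
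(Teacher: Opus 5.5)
Your proposal is correct and follows the paper's proof: you apply $T_\rho$ to the generating function $\exp(xt - \tfrac12 t^2)$ to get $\exp(\rho x t - \tfrac12\rho^2 t^2)$, expand both sides via \Cref{FACT:generatingfunctionforhermite}, and match the coefficients of $t^k$. Your domination bound $\sum_k |H_k(z)|\,|t|^k/\sqrt{k!} \le \exp(|t||z| + \tfrac12 t^2)$ is correct (via $i^{-k}\mathrm{He}_k(i\,\cdot)$), and it justifies pulling $T_\rho$ inside the infinite series, a step the paper attributes only to linearity; your version is therefore slightly more rigorous, and your self-adjointness argument is also a valid standalone proof.
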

\noindent
The proof presented below follows \cite[Section 0.I]{Bakry1994}.
\begin{proof}
    We want to use \Cref{FACT:generatingfunctionforhermite}. Define $g(x) = \exp(xt - \frac{1}{2}t^2)$. Then, by definition of $T_\rho$ and $g$, we get that
    \begin{align*}
        T_\rho g(x) &= \mathbb{E}_{Y \sim \gauss}\left[g\left(\rho x + \sqrt{1-\rho^2} Y\right)\right]\\
         &= \int_{y \in \R} \exp\left(\rho x t + \sqrt{1-\rho^2}y t -\frac{1}{2}t^2\right) \frac{1}{\sqrt{2\pi}} \exp\left(-\frac{1}{2}y^2\right) \mathrm{d}y.
    \end{align*}
    Rearranging the terms and completing the square, we get that
    \begin{align*}
        T_\rho g(x) &= \exp\left(\rho x t - \frac{1}{2}t^2\right) \int_{y \in \R} \frac{1}{\sqrt{2\pi}} \exp\left(-\frac{1}{2}\left(y -\sqrt{1-\rho^2}t\right)^2\right) \exp\left(\frac{1}{2}\left(1-\rho^2\right)t^2\right)\mathrm{d}y\\
        &= \exp\left(\rho x t - \frac{1}{2} \rho^2 t^2\right) \int_{y \in \R} \frac{1}{\sqrt{2\pi}} \exp\left(-\frac{1}{2}\left(y -\sqrt{1-\rho^2}t\right)^2\right) \mathrm{d}y.
    \end{align*}
    Substituting now $y' = y - \sqrt{1-\rho^2}t$, we get that
    \[
        \int_{y \in \R} \frac{1}{\sqrt{2\pi}} \exp\left(-\frac{1}{2}\left(y -\sqrt{1-\rho^2}t\right)^2\right) \mathrm{d}y = \int_{y \in \R} \frac{1}{\sqrt{2\pi}} \exp\left(-\frac{1}{2}\left(y'\right)^2\right) \mathrm{d}y' = 1
    \]
    and thus we get that
    \[
        T_\rho g(x) = \exp\left(\rho x t - \frac{1}{2} \rho^2 t^2\right).
    \]
    We now use \Cref{FACT:generatingfunctionforhermite} twice. First, we get that (using \Cref{FACT:generatingfunctionforhermite} for $t' = \rho t$)
    \[
        T_\rho g(x) = \exp\left(\rho x t - \frac{1}{2} \rho^2 t^2\right) = \sum_{n=0}^\infty H_n(x) \rho^n \frac{t^n}{\sqrt{n!}}.
    \]
    On the other hand, first using \Cref{FACT:generatingfunctionforhermite} and then linearity of $T_\rho$, we get
    \[
        T_\rho g(x) = T_\rho\left(\sum_{n=0}^\infty H_n(x) \frac{t^n}{\sqrt{n!}}\right) = \sum_{n=0}^\infty T_\rho H_n(x) \frac{t^n}{\sqrt{n!}}.
    \]
    Thus, we can conclude that
    \[
        \sum_{n=0}^\infty H_n(x) \rho^n \frac{t^n}{\sqrt{n!}} = \sum_{n=0}^\infty T_\rho H_n(x) \frac{t^n}{\sqrt{n!}}
    \]
    and since this holds for all $t \in \R$, we need to have
    \[
        T_\rho H_n = \rho^n H_n.\qedhere
    \]
\end{proof}

\subsection{Ornstein-Uhlenbeck operator and noise sensitivity} \label{SEC:prooffTrhof}
Here, we give a proof of~\Cref{LEM:f-Trhof}, showing that $\|f - T_\rho f\|_{L_1(\gauss)} = 2\GNS{1 - \rho}(f)$ for $f : \R^n \to \{\pm 1\}$.
\begin{proof}[Proof of \Cref{LEM:f-Trhof}]
    We have that
    \begin{align*}
        \|f - T_\rho f\|_{L_1(\gauss)} &= \mathbb{E}_{X \sim \gauss}\left[|f(X)- T_\rho f(X)|\right]\\
        &= \mathbb{E}_{X \sim \gauss}\left[\left|f(X)- \mathbb{E}_{Y \sim \gauss}\left[f\left(\rho X + \sqrt{1-\rho^2}Y\right)\right] \right|\right]\\
        &= \mathbb{E}_{X \sim \gauss}\left[\left|\mathbb{E}_{Y \sim \gauss}\left[f(X)- f\left(\rho X + \sqrt{1-\rho^2}Y\right)\right] \right|\right]\\
        &= \mathbb{E}_{X,Y \sim \gauss}\left[\left|f(X)- f\left(\rho X + \sqrt{1-\rho^2}Y\right)\right|\right].
    \end{align*}
    In the last step, we used the triangle inequality. For a function $f: \R^n \to \{\pm 1\}$ this is in fact an equality because for fixed $x$, the term $f(\x)- f(\rho \x + \sqrt{1-\rho^2}\y)$ is either always non-negative (if $f(x) = 1$) or always non-positive (if $f(x) = -1$).
    
    Now, if $X, Y \sim \mathcal{N}(0,I_n)$, then $X$ and $\rho X + \sqrt{1-\rho^2}Y$ are $\rho$-correlated Gaussians. Therefore, as
    \[
        \left|f(\x)- f\left(\rho \x + \sqrt{1-\rho^2}\y\right)\right| = 2 \cdot \mathbbb{1}\left\{f(\x) \neq f\left(\rho \x + \sqrt{1-\rho^2}\y\right)\right\}
    \]
    the above expectation is equal to $2\GNS{1-\rho}(f)$.
\end{proof}

\subsection{Hermite coefficients of a smooth function}
We show how to compute the Hermite coefficients of a smooth function in terms of its derivatives.
We need the following fact about the Hermite polynomials. Let $\varphi$ denote the density function of the standard Gaussian on $\R$.
\begin{fact}\label{FACT:Hermiteandgausspdf}
    For $x \in \R$ and $k \in \N_0$ we have that
    \[
        H_k(x)\varphi(x) = \frac{(-1)^k}{\sqrt{k!}} \cdot \frac{d^k\varphi}{dx^k}(x).
    \]
\end{fact}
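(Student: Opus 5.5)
We prove the identity in \Cref{FACT:Hermiteandgausspdf} by induction on $k$, using the characterization of $H_k$ as the orthonormal polynomials of degree $k$ with positive leading coefficient. Define
\[
\tilde H_k(x) \coloneqq (-1)^k \frac{1}{\varphi(x)}\frac{d^k\varphi}{dx^k}(x),
\]
which is the usual Rodrigues formula for the monic probabilist's Hermite polynomials. The goal is to show $\tilde H_k = \sqrt{k!}\, H_k$, which is exactly the claimed identity after multiplying by $\varphi$.

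\textbf{Step 1: $\tilde H_k$ is a monic polynomial of degree $k$.} We have $\tilde H_0 = 1$. Differentiating $\tilde H_k \varphi = (-1)^k \varphi^{(k)}$ and using $\varphi' = -x\varphi$ gives
\[
\tilde H_{k+1}(x) = x\tilde H_k(x) - \tilde H_k'(x).
\]
Induction then shows that $\tilde H_{k+1}$ is a monic polynomial of degree $k+1$.

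\textbf{Step 2: orthogonality.} Fix $j<k$. Then
\[
\mathbb{E}[\tilde H_j(X)\tilde H_k(X)] = (-1)^k\int \tilde H_j(x)\,\varphi^{(k)}(x)\,dx.
\]
Integrate by parts $k$ times. The boundary terms vanish because a polynomial times $\varphi^{(m)}$, which is itself a polynomial times $\varphi$, decays at $\pm\infty$. After $k$ steps the integral becomes, up to sign, $\int \tilde H_j^{(k)}\varphi\,dx$, which is $0$ since $\deg \tilde H_j < k$.

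\textbf{Step 3: normalization.} For $j=k$, the same $k$ integrations by parts give
\[
\mathbb{E}[\tilde H_k(X)^2] = \int \tilde H_k^{(k)}\varphi\,dx = k!,
\]
since $\tilde H_k$ is monic of degree $k$, so $\tilde H_k^{(k)} = k!$. Keeping track of signs: each integration by parts contributes a factor $-1$, and together with the prefactor $(-1)^k$ the total sign is $(-1)^{2k}=1$.

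\textbf{Conclusion.} By Steps 1–3, the polynomials $\tilde H_k/\sqrt{k!}$ have degree $k$, positive leading coefficient, and are orthonormal in $L_2(\gaussoned)$. By the uniqueness in the definition of $H_k$ (Gram–Schmidt applied to $1, x, x^2, \dots$), we get $H_k = \tilde H_k/\sqrt{k!}$. Rearranging gives the identity in \Cref{FACT:Hermiteandgausspdf}.

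The only real care needed is in Steps 2–3: justifying that the boundary terms vanish in the repeated integrations by parts, and tracking the signs. Both are routine because $\varphi^{(m)} = (\text{polynomial})\cdot\varphi$ decays faster than any polynomial grows.
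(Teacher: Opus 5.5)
Your proof is correct. The paper gives no argument for this statement: it is quoted as a standard fact (Rodrigues' formula) and used as a black box in the proof of \Cref{LEM:Hermitecoeffsderivatives}. Your derivation works directly from the paper's definition of $H_k$ (degree $k$, positive leading coefficient, orthonormal in $L_2(\gaussoned)$), and each step holds. The recurrence $\tilde H_{k+1} = x\tilde H_k - \tilde H_k'$ follows from $\varphi' = -x\varphi$. The $k$-fold integration by parts $\int p\,\varphi^{(k)}\,\mathrm{d}x = (-1)^k\int p^{(k)}\varphi\,\mathrm{d}x$ is justified because every boundary term is a polynomial times $\varphi$. Uniqueness holds because $H_k$ must lie in the one-dimensional orthogonal complement of the polynomials of degree $<k$ inside those of degree $\le k$.

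A shorter route exists if you also take \Cref{FACT:generatingfunctionforhermite} for granted, as the paper does. Multiply the generating function by $\varphi(x)$: since $\varphi(x)\exp(xt - t^2/2) = \varphi(x-t)$ and $\varphi$ is entire, expanding both sides in powers of $t$ gives $\sum_k H_k(x)\varphi(x)\,t^k/\sqrt{k!} = \sum_k \varphi^{(k)}(x)(-t)^k/k!$. Comparing coefficients yields the claim. That argument is a one-liner but only trades one unproved fact for another, whereas yours is self-contained relative to the definition. Your identity also immediately gives $H_k\varphi = -\frac{1}{\sqrt{k}}(H_{k-1}\varphi)'$, which the paper invokes with only a pointer to ``the recurrence relation'' in the proof of \Cref{lem:expansion-sign}.
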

\noindent
With this fact, we can now prove the following lemma.
\begin{lemma}\label{LEM:Hermitecoeffsderivatives}
    Let $g: \R^n \to \R$ be a smooth function with $\left\|\frac{\partial g}{\partial\alpha}\right\|_\infty < \infty$ for all multi-indices $\alpha$.
    Let $\beta$ be a multi-index.
    Then we have that
    \[
        \hat{g}(\beta) \coloneqq \langle g, H_\beta\rangle_{\gauss} = \frac{\mathbb{E}_{X \sim \gauss}\left[\frac{\partial g}{\partial\beta}(X)\right]}{\sqrt{\beta!}}.    \]
\end{lemma}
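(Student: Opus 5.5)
The plan is Gaussian integration by parts, applied one derivative at a time and using \Cref{FACT:Hermiteandgausspdf}. Because the Gaussian density and the multivariate Hermite polynomials both factor over coordinates, I will first prove the one-dimensional statement
\[
\int_\R g(x) H_k(x)\varphi(x)\,\mathrm{d}x = \frac{1}{\sqrt{k!}}\int_\R g^{(k)}(x)\varphi(x)\,\mathrm{d}x .
\]
Here $g:\R\to\R$ is smooth and has bounded derivatives of all orders.

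For the one-dimensional case, \Cref{FACT:Hermiteandgausspdf} rewrites the left-hand side as $\frac{(-1)^k}{\sqrt{k!}}\int_\R g(x)\varphi^{(k)}(x)\,\mathrm{d}x$. I then integrate by parts $k$ times. At step $j$, the boundary term is $g^{(j)}(x)\varphi^{(k-j-1)}(x)$ evaluated at $\pm\infty$. The factor $\varphi^{(m)}$ is a polynomial times $\varphi$, so it decays faster than any polynomial, and $g^{(j)}$ is bounded by hypothesis. Hence every boundary term vanishes. Each integration by parts contributes one sign, and the $k$ signs cancel the factor $(-1)^k$. This leaves $\frac{1}{\sqrt{k!}}\int g^{(k)}\varphi$, as claimed. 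If $g$ is only of polynomial growth rather than bounded (the $k=0$ case is trivial in either setting), the same vanishing argument still applies.

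For the multivariate case, write $\beta=(\beta_1,\dots,\beta_n)$ and $H_\beta(x)\prod_i\varphi(x_i)=\prod_i H_{\beta_i}(x_i)\varphi(x_i)$. I apply the one-dimensional identity in each coordinate in turn, using Fubini. Fix $x_2,\dots,x_n$ and integrate in $x_1$ with $k=\beta_1$; this replaces $g$ by $\partial_1^{\beta_1} g$ and contributes a factor $1/\sqrt{\beta_1!}$. The new function again has bounded derivatives of all orders, so I can repeat the step for $x_2$, and so on. After $n$ steps the integral equals $\frac{1}{\sqrt{\beta!}}\,\mathbb{E}_{X\sim\gauss}\bigl[\frac{\partial g}{\partial\beta}(X)\bigr]$, where $\beta!=\prod_i\beta_i!$.

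The only delicate point is justifying the vanishing of the boundary terms and the use of Fubini. Both follow from the boundedness assumption on all partial derivatives: every integrand is bounded by a polynomial times the Gaussian density, so it is absolutely integrable. I therefore do not expect a real obstacle.
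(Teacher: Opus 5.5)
Your proposal is correct and follows the paper's proof: rewrite $H_k\varphi$ as $\frac{(-1)^k}{\sqrt{k!}}\varphi^{(k)}$, integrate by parts $k$ times, and note that the boundary terms vanish because the derivatives of $g$ are bounded and $\varphi^{(m)}$ decays like a Gaussian; then handle the multivariate case one coordinate at a time via Fubini. Your justification of the vanishing boundary terms and of the use of Fubini is slightly more explicit than the paper's, but the argument is the same.
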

\begin{proof}
    In order to prove this lemma, we want to use Gaussian integration by parts on every coordinate.
    For this, consider first the case $n = 1$.
    In this case we have, by \Cref{FACT:Hermiteandgausspdf},
    \[
        \langle g, H_k \rangle_{\gaussoned} = \frac{(-1)^k}{\sqrt{k!}}\int_{x \in \R} g(x) \cdot \frac{d^k\varphi}{dx^k}(x) \mathrm{d}x.
    \]
    Using integration by parts $k$ times, we get
    \[
        \langle g, H_k \rangle_{\gaussoned} = \frac{1}{\sqrt{k!}}\int_{x \in \R} \frac{d^k g}{dx^k}(x) \cdot \varphi(x) \mathrm{d}x = \frac{1}{\sqrt{k!}}\mathbb{E}_{X \sim \gauss}\left[ \frac{d^k g}{dx^k}(X)\right].
    \]
    Note that the boundary conditions are always $0$ because we assume $\left\|\frac{d^\ell g}{dx^\ell}\right\|_\infty < \infty$.
    This shows the one-dimensional case.
    For the multivariate case note that we can iteratively consider each coordinate to get
    \begin{align*}
        \hat{g}(\beta) &\coloneqq \langle g, H_\beta\rangle_{\gauss}\\
        &= \mathbb{E}_{X \sim \gauss}[g(X) \cdot H_{\beta_1}(X_1) \cdot \ldots \cdot H_{\beta_n}(X_n)]\\
        &= \mathbb{E}_{X_2,\ldots,X_{n} \sim \gaussoned}[\mathbb{E}_{X_1 \sim \gaussoned}[g(X) \cdot H_{\beta_1}(X_1) \cdot \ldots \cdot H_{\beta_n}(X_n) \mid X_2, \ldots, X_n]]\\
        &= \mathbb{E}_{X_2,\ldots,X_{n} \sim \gaussoned}[\mathbb{E}_{X_1 \sim \gaussoned}[g(X) \cdot H_{\beta_1}(X_1) \mid X_2, \ldots, X_n]  \cdot H_{\beta_2}(X_2) \ldots \cdot H_{\beta_n}(X_n)].
    \end{align*}
    Now, we can use the one-dimensional case to get 
    \[
        \mathbb{E}_{X_1 \sim \gaussoned}[g(X) H_{\beta_1}(X_1) \mid X_2, \ldots, X_n] = \frac{1}{\sqrt{\beta_1!}} \mathbb{E}_{X_1 \sim \gaussoned}\left[\frac{\partial^{\beta_1} g}{\partial x_1^{\beta_1}}(X) \mid X_2, \ldots, X_n\right]
    \]
    and thus
    \[
        \hat{g}(\beta) = \frac{1}{\sqrt{\beta_1!}} \mathbb{E}_{X \sim \gauss}\left[\frac{\partial^{\beta_1} g}{\partial x_1^{\beta_1}}(X) \cdot H_{\beta_2}(X_2) \ldots \cdot H_{\beta_n}(X_n)\right].
    \]
    We can iteratively use the same argument for the other coordinates to get the result.
\end{proof}

\subsection{Hermite truncation for halfspaces}
\label{sec:hermite-halfspace}

\newcommand\diff{\,\mathrm d}

The goal of this subsection is to prove~\Cref{PROP:sign}.
All $O$ notations are with
respect to $d\to\infty$.
We denote by $\varphi$ the density
of the standard Gaussian 
distribution on $\R$. 

Note that $\sign\in L^2(\mathcal N_1)$, so the Hermite expansion of $\sign$ is well-defined. Moreover, since $\sign$ is odd, all even-degree Hermite
coefficients of $\sign$ are 0, so it suffices to consider
the case of odd~$d$. By the same symmetry,
we can simplify
\begin{equation}
    \|\sign - \Pi_d \sign\|_{L_1(\mathcal N_1)} =     \int_{-\infty}^\infty |\sign(x) - \Pi_d \sign(x)|\diff \varphi(x) = 2 \int_{0}^\infty |1 - \Pi_d \sign(x)| \diff \varphi(x)\,.\label{eq:target-hermite}
\end{equation}
To bound the integral on the
right-hand side, we first express the low-degree
Hermite projection of $\sign$
in terms of Hermite polynomials:

\begin{lemma}
    \label{lem:expansion-sign}
    If $d$ is odd, then for every $x\ge 0$,
    \[
        \Pi_d \sign(x) = \sqrt{\frac {2d} \pi} \,H_{d-1}(0) \int_0^x \frac {H_d(t)} t \diff t\,.
    \]
\end{lemma}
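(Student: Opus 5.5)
The plan is to compute the Hermite coefficients of $\sign$ explicitly and then sum the truncated series through a Christoffel--Darboux type identity. Write $\hat s_k \coloneqq \langle \sign, H_k\rangle_{\gaussoned}$; these vanish for even $k$. For odd $k$ I will use \Cref{FACT:Hermiteandgausspdf}:
\[
\hat s_k = 2\int_0^\infty H_k\varphi \diff x = \frac{2(-1)^k}{\sqrt{k!}}\bigl[\varphi^{(k-1)}\bigr]_0^\infty = \frac{2(-1)^{k+1}}{\sqrt{k!}}\varphi^{(k-1)}(0).
\]
Applying the fact again gives $\varphi^{(k-1)}(0) = (-1)^{k-1}\sqrt{(k-1)!}\,H_{k-1}(0)\varphi(0)$. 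Hence $\hat s_k = 2\varphi(0)H_{k-1}(0)/\sqrt{k} = \sqrt{2/\pi}\,H_{k-1}(0)/\sqrt k$.

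Next I reduce the claim to a statement about derivatives. Both sides vanish at $x=0$: the left side is odd, and the right side is an integral from $0$ to $0$. So it suffices to show that the derivatives agree, namely
\[
x\,(\Pi_d \sign)'(x) = \sqrt{\tfrac{2d}{\pi}}\,H_{d-1}(0)\,H_d(x).
\]
This is a polynomial identity, so it then holds for all $x$, in particular for $x \ge 0$. Note that $H_d(t)/t$ is a polynomial because $d$ is odd.

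To compute the left side I use the normalized relations $H_k' = \sqrt{k}\,H_{k-1}$ and $xH_{k-1} = \sqrt{k}\,H_k + \sqrt{k-1}\,H_{k-2}$. These give
\[
x H_k'(x) = k H_k(x) + \sqrt{k(k-1)}\,H_{k-2}(x).
\]
Therefore
\[
x(\Pi_d\sign)' = \sum_{k\le d \text{ odd}} \hat s_k\bigl(kH_k + \sqrt{k(k-1)}\,H_{k-2}\bigr).
\]
Collecting the coefficient of $H_k$ for odd $k<d$ gives $k\hat s_k + \sqrt{(k+2)(k+1)}\,\hat s_{k+2}$. Substituting $\hat s_k$, this is proportional to
\[
\sqrt{k}\,H_{k-1}(0) + \sqrt{k+1}\,H_{k+1}(0).
\]
This vanishes by the three-term recurrence evaluated at $0$, which reads $\sqrt{k+1}\,H_{k+1}(0) = -\sqrt{k}\,H_{k-1}(0)$. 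The sum therefore telescopes, and only the top term survives: $d\,\hat s_d\,H_d = \sqrt{2/\pi}\,\sqrt d\,H_{d-1}(0)\,H_d$. This is exactly the required identity.

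The main obstacle is bookkeeping rather than anything conceptual. The work lies in getting the normalizations of the orthonormal Hermite recurrences right, checking the signs in the boundary term $[\varphi^{(k-1)}]_0^\infty$, and handling the edge case $k=1$, where the $H_{k-2}$ term is absent because $\sqrt{k(k-1)}=0$. I would verify the identity for $d=1$ as a sanity check: the left side is $\hat s_1 x = \sqrt{2/\pi}\,x$, and the right side is $\sqrt{2/\pi}\,H_0(0)\int_0^x 1\diff t = \sqrt{2/\pi}\,x$, so they agree.
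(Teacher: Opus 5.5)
Your proposal is correct and follows the paper's route. You compute the same coefficients $\sqrt{2/\pi}\,H_{k-1}(0)/\sqrt{k}$, via \Cref{FACT:Hermiteandgausspdf} instead of the equivalent identity $H_k\varphi = -\frac{1}{\sqrt k}(H_{k-1}\varphi)'$, and then differentiate and sum; the one difference is that where the paper cites the Christoffel--Darboux formula for $\sum_{k<d} H_k(x)H_k(0)$, you verify the needed special case at $0$ directly from the three-term recurrence, which is the standard proof of that formula and makes your argument self-contained.
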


The integral on the right-hand side is well-defined for any fixed odd~$d$, because, as $t\to 0$,
we have 
$H_d(t) \sim C(d) t$ for some
constant $C(d)>0$.

\begin{proof}
    For $k$ odd, the 
    $k$-th Hermite coefficient
    of $\sign$ is
    \[
        \langle \sign, H_k\rangle_{\mathcal N_1} = \int_{-\infty}^\infty \sign(y) H_k(y) \diff \varphi(y) = 2\int_0^\infty H_k(y) \diff \varphi(y)\,.
    \]
    Next, we use the identity $H_k \varphi = -\frac 1 {\sqrt k} (H_{k-1} \varphi)'$ (which follows from the recurrence
    relation that the orthonormalized Hermite polynomials satisfy) to obtain
    \[
        \langle \sign, H_k\rangle_{\mathcal N_1} = \frac2 {\sqrt k} \left[-H_{k-1}(y) \varphi(y)\right]^\infty_0 = \sqrt{\frac 2 {k\pi}} H_{k-1}(0)\,.
    \]
    Therefore, for any $x\in \R$,
    \[
        \Pi_d \sign(x) = \sum_{k=1}^d \langle \sign, H_k\rangle_{\mathcal N_1} H_k(x) = \sqrt{\frac 2 \pi} \sum_{k=1}^d \frac {H_k(x)  H_{k-1}(0)} {\sqrt k}\,.
    \]
    Differentiating both
    sides with respect to $x$ and applying
    the identity $H_k' = \sqrt k H_{k-1}$,
    \[
        (\Pi_d \sign)'(x) = \sqrt{\frac 2 \pi} \sum_{k=1}^d H_{k-1}(x) H_{k-1}(0)\,.
    \]
    Now, we apply the Christoffel-Darboux formula~\cite[Theorem 3.2.2]{Szego:orthogonalpolynomials}:
    \[
        \sum_{k=0}^{d-1} H_k(x) H_k(0) = \sqrt{d} \,\frac {H_d(x) H_{d-1}(0) - H_{d-1}(x) H_d(0)} {x}\,.
    \]
    Since $d$ is odd, $H_d(0)=0$,
    so we are left with:
    \[
        (\Pi_d \sign)'(x) = \sqrt{\frac {2d} \pi} H_{d-1}(0) \frac {H_d(x)} x\,,
    \]
    and
    integrating this identity yields the claim.
\end{proof}

\subsubsection{Asymptotics of Hermite polynomials}
\label{sec:hermite-asymp}

We will use a pointwise asymptotic expansion of
Hermite polynomials $H_d(x)$ 
as $d\to\infty$.

The expansion
holds uniformly in the region $|x|\ll \sqrt d$ and is
known as the {\em Plancherel-Rotach asymptotics}~\cite{Szego:orthogonalpolynomials}. It takes
the form
\begin{equation}
    H_d(x) = \exp\left(\frac{x^2}{4}\right) \cdot \left(\frac{2}{\pi d}\right)^{\frac{1}{4}} \cdot \left(\sin\left[\frac {1-d} 2 \pi + \sqrt{d}x\right] + r_d(x)\right)\,,\label{eq:def-rem}
\end{equation}
which defines the remainder term $r_d(x)$ that we will bound next.
Note that the argument of $\sin$ can be simplified
depending on the parity of $d$:
\[
    \sin\left[\frac {1-d} 2 \pi + \sqrt{d}x\right] =
        \begin{cases}
        (-1)^{\frac{d-1}{2}}\sin(\sqrt{d}x) & \text{if $d$ is odd}\\
        (-1)^{\frac d2}\cos(\sqrt{d}x) & \text{if $d$ is even}
        \end{cases}
\]

\begin{lemma}
    \label{claim:plancherel-rotach}
    For all $|x| \leq \sqrt d$, we have
    \[
        |r_d(x)|\le  O\left(\max\left\{\frac{|x|^3}{\sqrt{d}},\frac{|x|}{\sqrt{d}},\frac{x^2}{d},\frac{1}{d}\right\}\right)\,,
    \]
    where the $O$ is uniform over all $|x| \leq \sqrt d$.
\end{lemma}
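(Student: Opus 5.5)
We need to bound the Plancherel–Rotach remainder uniformly for $|x|\le\sqrt d$. The plan is to write $u(x)\coloneqq H_d(x)\sqrt{\varphi(x)}$, up to the normalizing constant, which removes the Gaussian weight. By the Hermite differential equation $H_d''-xH_d'+dH_d=0$, $u$ satisfies the Weber equation
\[
u''(x) + \Big(d+\tfrac12-\tfrac{x^2}{4}\Big)u(x)=0.
\]
Set $\omega\coloneqq\sqrt{d+1/2}$ and treat $-\tfrac{x^2}{4}u$ as a perturbation. Variation of constants then gives the exact Volterra representation
\[
u(x)=A\cos(\omega x)+B\sin(\omega x)+\frac{1}{\omega}\int_0^x \sin(\omega(x-t))\,\frac{t^2}{4}\,u(t)\,dt,
\]
where $A=u(0)$ and $B=u'(0)/\omega$ are explicit in terms of $H_d(0)$ and $H_d'(0)=\sqrt d H_{d-1}(0)$. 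The factor $\exp(x^2/4)$ in the statement is exactly $1/\sqrt{2\pi\varphi(x)}$, so $r_d(x)$ is $(\pi d/2)^{1/4}u(x)/u_{\mathrm{scale}}$ minus the main sine term.

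The first step is to control the initial data. For odd $d$ we have $A=0$, and $B\omega=\sqrt d\,H_{d-1}(0)$, with $|H_{d-1}(0)|=\sqrt{(d-2)!!/(d-1)!!}$. Stirling/Wallis gives $|H_{d-1}(0)|=(2/(\pi d))^{1/4}(1+O(1/d))$. Hence the leading term is $(2/(\pi d))^{1/4}(\pm\sin(\omega x))(1+O(1/d))$, with the sign matching $(-1)^{(d-1)/2}$. Even $d$ is handled symmetrically with a cosine. This produces the $1/d$ term of the remainder.

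The second step replaces $\omega$ by $\sqrt d$. Since $\omega-\sqrt d=O(1/\sqrt d)$, we get $|\sin(\omega x)-\sin(\sqrt d x)|\le O(|x|/\sqrt d)$, which is the $|x|/\sqrt d$ term.

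The third step bounds the integral term, and I expect it to be the main obstacle: the integral must be shown to be $O(|x|^3/\sqrt d)$ relative to the amplitude, uniformly over the whole range. First I would get an a priori bound $\sup_{|t|\le|x|}|u(t)|\le C\,d^{-1/4}$ times the normalization. For $|x|\le d^{1/6}$ this follows from Gronwall applied to the Volterra equation, since the kernel has total mass $\frac{1}{\omega}\int_0^x t^2/4\le |x|^3/\sqrt d\le1$. Plugging this bound back into the integral gives $O(|x|^3/\sqrt d)$.

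For $d^{1/6}\le|x|\le\sqrt d$ the claimed bound $|x|^3/\sqrt d\ge1$ is trivial, provided we know that $|u|$ stays $O(d^{-1/4})$ up to $|x|\le\sqrt d$, i.e.\ $H_d(x)e^{-x^2/4}=O(d^{-1/4})$ there. The natural first attempt is the classical energy argument: $E(x)\coloneqq u'^2+(d+\tfrac12-x^2/4)u^2$ is nonincreasing in $|x|$, because $E'=-\tfrac{x}{2}u^2$. With $d+\tfrac12-x^2/4\ge 3d/4$ on $|x|\le\sqrt d$, this gives $u^2\le E(0)/(3d/4)$. Since $E(0)=O(d\cdot d^{-1/2})$ in these units, we obtain $|u|=O(d^{-1/4})$ uniformly. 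This energy bound in fact also serves as the a priori bound for the Gronwall step, so both regimes are covered by a single monotonicity argument.

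Collecting the three contributions gives the stated maximum. The $x^2/d$ term is absorbed as a second-order correction from the $\omega$-expansion, or by crude bounding, and it is dominated anyway.
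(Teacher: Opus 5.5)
Your argument is correct, and it takes a genuinely different route from the paper.

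\textbf{The paper's route.} The paper uses the Plancherel--Rotach formula of Szeg\H{o} (Theorem 8.22.9) as a black box. That formula is valid for $|x|\le\frac12\sqrt{4d+2}$ with an $O(1/d)$ error. The paper then Taylor-expands the amplitude, $\sin[\arccos(x/\sqrt{4d+2})]^{-1/2}=1+O(x^2/d)$, and the phase, $g(y)=2y\sqrt{1-y^2}-2\arccos y=-\pi+4y+O(|y|^3)$. These expansions produce the $x^2/d$ and $|x|^3/\sqrt d$ terms. Finally it replaces $\sqrt{d+1/2}$ by $\sqrt d$.

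\textbf{Your route.} You derive everything from the Weber equation $u''+(d+\frac12-\frac{x^2}{4})u=0$ for $u=H_de^{-x^2/4}$:
\begin{itemize}
\item the initial data come from the exact values $H_{d-1}(0)^2=(d-2)!!/(d-1)!!$ (resp.\ $H_d(0)^2=(d-1)!!/d!!$) together with Wallis;
\item the frequency shift $\omega-\sqrt d=O(d^{-1/2})$ gives the $|x|/\sqrt d$ term;
\item the Duhamel integral is at most $\frac{|x|^3}{12\omega}\sup|u|$.
\end{itemize}
For the last bullet you need the a priori bound $|u|=O(d^{-1/4})$, which the monotonicity of $E=u'^2+(d+\frac12-\frac{x^2}{4})u^2$ supplies: $u^2\le E(0)/(3d/4)=O(d^{-1/2})$ on $|x|\le\sqrt d$. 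This also makes your Gronwall step superfluous.

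\textbf{What each approach buys.} Your proof is self-contained and elementary, and it avoids the deep asymptotic theorem. It also gives the slightly sharper bound $O(1/d+|x|/\sqrt d+|x|^3/\sqrt d)$; no $x^2/d$ term is needed, though that term is dominated on $|x|\le\sqrt d$ anyway. As a by-product, your energy bound yields Corollary~\ref{cor:uniform-small-hermite} on the larger range $|t|\le\sqrt d$. Note that you take the asymptotics of $H_d(0)$ as an input via Wallis, whereas the paper derives Corollary~\ref{claim:equiv-zero} from this lemma. The paper's route is shorter given the citation. Its phase remains accurate to $O(1/d)$ well beyond $|x|\approx d^{1/6}$, where your perturbative bound only gives the trivial $O(1)$; that extra precision is not needed for this lemma.

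\textbf{A small slip.} In fact $e^{x^2/4}=(2\pi)^{-1/4}\varphi(x)^{-1/2}$, not $1/\sqrt{2\pi\varphi(x)}$. This is harmless, since you only ever work with $u=H_de^{-x^2/4}$.
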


\begin{proof}
    We use the following formula~\cite[Theorem 8.22.9]{Szego:orthogonalpolynomials}:
    \begin{align*}
        H_d(x) &= \exp\left(\frac{x^2}{4}\right) \cdot \left(\frac{2}{\pi d}\right)^{\frac{1}{4}} \cdot \sin\left[\arccos\left(\frac{x}{\sqrt{4d+2}}\right)\right]^{-\frac{1}{2}}\cdot\\
       & \mathrel{\phantom{=}}\left(\sin\left[\left(\frac{d}{2}+\frac{1}{4}\right)\left(2\frac{x}{\sqrt{4d+2}}\sqrt{1-\frac{x^2}{4d+2}}-2\arccos\left(\frac{x}{\sqrt{4d+2}}\right)\right)+\frac{3\pi}{4}\right] + O\left(\frac{1}{d}\right)\right).
    \end{align*}
    This holds whenever $|x| \leq \frac{1}{2}\sqrt{4d+2}$ and the $O$ is uniform over all such $x$.
    We use Taylor expansion to get that $\sin[\arccos(x/\sqrt{4d+2})] = 1 + O(x^2/d)$, where the $O$ is again uniform over all $|x| \leq \frac{1}{2}\sqrt{4d+2}$.
    Defining $g(y) = 2y\sqrt{1-y^2}-2\arccos\left(y\right)$, we can thus write
    \[
        H_d(x) = \exp\left(\frac{x^2}{4}\right) \cdot \left(\frac{2}{\pi d}\right)^{\frac{1}{4}} \cdot \left(\sin\left[\left(\frac{d}{2}+\frac{1}{4}\right)\cdot g\left(\frac{x}{\sqrt{4d+2}}\right)+\frac{3\pi}{4}\right] + O\left(\max\left\{\frac{x^2}{d},\frac{1}{d}\right\}\right)\right).
    \]
    Using the Taylor expansion of $g$ around $0$, we get $g(y) = -\pi + 4y + \frac{1}{2}g''(\xi)y^2$ for some $\xi$ between~$0$ and $y$.
    We can bound the derivative $g''(\xi)$ by $O(|y|)$, for some uniform constant for all $|y| \leq \frac{1}{2}$.
    Putting this all together and using that $\sin$ is 1-Lipschitz, we get
    \begin{align*}
        H_d(x) 
        &= \exp\left(\frac{x^2}{4}\right) \cdot \left(\frac{2}{\pi d}\right)^{\frac{1}{4}} \cdot \left(\sin\left[\left(\frac{d}{2}+\frac{1}{4}\right)\cdot \left(-\pi + 4\frac{x}{\sqrt{4d+2}}\right)+\frac{3\pi}{4}\right] + O\left(\max\left\{\frac{|x|^3}{\sqrt{d}},\frac{x^2}{d},\frac{1}{d}\right\}\right)\right)\\
        &= \exp\left(\frac{x^2}{4}\right) \cdot \left(\frac{2}{\pi d}\right)^{\frac{1}{4}} \cdot \left(\sin\left[\frac{1-d}{2}\pi + \sqrt{d+\frac{1}{2}}x\right] + O\left(\max\left\{\frac{|x|^3}{\sqrt{d}},\frac{x^2}{d},\frac{1}{d}\right\}\right)\right)\,.
    \end{align*}
    Next, we note that $\sqrt{d+\frac{1}{2}} = \sqrt{d} + O(1/\sqrt{d})$ and hence, using Lipschitzness of $\sin$ again, we get
    \[
        H_d(x) = \exp\left(\frac{x^2}{4}\right) \cdot \left(\frac{2}{\pi d}\right)^{\frac{1}{4}} \cdot \left(\sin\left[\frac{1-d}{2}\pi + \sqrt{d}x\right] + O\left(\max\left\{\frac{|x|^3}{\sqrt{d}},\frac{|x|}{\sqrt{d}},\frac{x^2}{d},\frac{1}{d}\right\}\right)\right),
    \]
    as claimed.
\end{proof}

\begin{corollary}
    \label{claim:equiv-zero}
    If $d$ is even, then
    \[
        H_d(0) = (-1)^{\frac d 2}\left(\frac 2 {\pi}\right)^{\frac 14} d^{-\frac 14} + O(d^{-\frac 54})\,.
    \]
\end{corollary}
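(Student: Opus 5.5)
The plan is to specialize \Cref{claim:plancherel-rotach} to $x=0$ and use $d$ even.

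At $x=0$ the prefactor $\exp(x^2/4)$ equals $1$. The sine term, by the parity simplification stated right after \eqref{eq:def-rem}, is $(-1)^{d/2}\cos(0)=(-1)^{d/2}$ for even $d$. Plugging in, \eqref{eq:def-rem} becomes
\[
H_d(0) = \left(\frac{2}{\pi d}\right)^{\frac14}\left((-1)^{\frac d2} + r_d(0)\right),
\]
so it remains to bound $r_d(0)$.

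Since $x=0$ satisfies $|x|\le\sqrt d$, \Cref{claim:plancherel-rotach} applies. The terms $|x|^3/\sqrt d$, $|x|/\sqrt d$ and $x^2/d$ all vanish, leaving $|r_d(0)| = O(1/d)$. Multiplying by the prefactor $(2/\pi)^{1/4}d^{-1/4}$ turns this into an error of $O(d^{-5/4})$, which is exactly the claim.

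The only point that needs care is that the $O(1/d)$ term is genuinely retained at $x=0$. It is: the maximum in \Cref{claim:plancherel-rotach} includes the standalone $1/d$ term, which comes from the $O(1/d)$ in Szeg\H{o}'s formula and from $\sin[\arccos(0)]=1$. Every other contribution in that proof carries a factor of $x$ and so vanishes at $0$.

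As a sanity check, one can instead use the exact formula for even $d$,
\[
H_d(0) = (-1)^{d/2}\frac{\sqrt{d!}}{2^{d/2}(d/2)!},
\]
together with Stirling's formula including its $1+O(1/d)$ correction. This gives $(-1)^{d/2}(\pi d/2)^{-1/4}(1+O(1/d))$, consistent with the statement.
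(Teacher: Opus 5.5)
Your proposal is correct and is the paper's own argument: the paper gives no separate proof and treats the statement as an immediate corollary of \Cref{claim:plancherel-rotach}. Evaluating \eqref{eq:def-rem} at $x=0$ gives sine term $(-1)^{d/2}$ for even $d$, and \Cref{claim:plancherel-rotach} leaves only $|r_d(0)| = O(1/d)$, hence an error of $O(d^{-5/4})$. Your check via the exact value $H_d(0) = (-1)^{d/2}\sqrt{d!}/(2^{d/2}(d/2)!)$ and Stirling's formula is also valid, and on its own gives a self-contained proof that does not rely on Szeg\H{o}'s asymptotics.
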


\begin{corollary}\label{cor:uniform-small-hermite}
    We have
    \[
        \sup_{t\in [0,d^{1/6}]} |H_d(t)|e^{-t^2/4}\le O(d^{-\frac 14})\,.
    \]
\end{corollary}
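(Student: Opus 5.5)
Since $e^{-t^2/4}\le 1$, the statement is a uniform bound on $|H_d(t)|e^{-t^2/4}$ over $t\in[0,d^{1/6}]$, and I would read it off directly from the Plancherel--Rotach expansion~\eqref{eq:def-rem} together with the remainder estimate of \Cref{claim:plancherel-rotach}. For $d$ large enough we have $d^{1/6}\le\sqrt d$, so \Cref{claim:plancherel-rotach} applies on the whole range. Multiplying~\eqref{eq:def-rem} by $e^{-t^2/4}$ cancels the exponential prefactor exactly:
\[
    |H_d(t)|e^{-t^2/4} = \left(\frac{2}{\pi d}\right)^{\frac14}\left|\sin\left[\frac{1-d}{2}\pi+\sqrt d\, t\right] + r_d(t)\right| \le \left(\frac{2}{\pi d}\right)^{\frac14}\bigl(1+|r_d(t)|\bigr).
\]

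It therefore suffices to show $\sup_{t\in[0,d^{1/6}]}|r_d(t)| = O(1)$. By \Cref{claim:plancherel-rotach}, $|r_d(t)|$ is at most a constant times the maximum of four terms, each bounded for $0\le t\le d^{1/6}$:
\begin{itemize}
    \item $t^3/\sqrt d\le d^{1/2}/d^{1/2}=1$;
    \item $t/\sqrt d\le d^{-1/3}$;
    \item $t^2/d\le d^{-2/3}$;
    \item $1/d$.
\end{itemize}
So $|r_d(t)|=O(1)$ uniformly, and the supremum is $O(d^{-1/4})$. The cutoff $d^{1/6}$ is exactly what makes the cubic term $|x|^3/\sqrt d$ bounded; this is the only step needing care.

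For the finitely many small $d$ where $d^{1/6}>\sqrt d$ (only $d=0$, and $d=1$ is fine), the supremum is over a compact interval of a continuous function, hence finite. This is absorbed into the $O$-constant, since all $O$'s are with respect to $d\to\infty$.

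I expect no real obstacle. The only subtlety is that the $O$ in \Cref{claim:plancherel-rotach} is uniform in $x$, which the lemma explicitly provides.
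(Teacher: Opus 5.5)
Your proposal is correct and matches the paper's approach: the paper states this corollary without proof, and the intended derivation is exactly yours. Multiply the expansion \eqref{eq:def-rem} by $e^{-t^2/4}$, bound the sine by $1$, and note that the remainder bound of \Cref{claim:plancherel-rotach} is $O(1)$ on $[0,d^{1/6}]$ because $t^3/\sqrt d\le 1$ there. (Your small-$d$ caveat is harmless but unnecessary, since $d^{1/6}\le\sqrt d$ for every integer $d\ge 1$.)
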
 

\begin{corollary}\label{cor:small-r}
    If $d$ is odd, then
    \[
        \sup_{t\in (0,d^{-1/2}]} \frac {|r_d(t)|} t \le O\left(\frac 1 {\sqrt d}\right)\,.        
    \]
\end{corollary}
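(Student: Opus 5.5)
We need to prove the last statement, Corollary~\ref{cor:small-r}: for odd $d$, $\sup_{t\in(0,d^{-1/2}]} |r_d(t)|/t \le O(1/\sqrt d)$.

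The bound of Lemma~\ref{claim:plancherel-rotach} alone gives only $|r_d(t)| = O(t/\sqrt d + 1/d)$ on this range. Dividing by $t$, the $1/d$ term blows up as $t\to 0$. So the key extra input is that $r_d$ vanishes at $0$ when $d$ is odd, together with a derivative bound on $r_d$ near $0$. The plan is to write $r_d(t) = \int_0^t r_d'(s)\,\mathrm ds$ and show $\sup_{|s|\le d^{-1/2}} |r_d'(s)| = O(1/\sqrt d)$.

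\textbf{Step 1 (vanishing at $0$).} For odd $d$, $H_d$ is odd, so $H_d(0)=0$. Also $\sin[\tfrac{1-d}{2}\pi] = 0$. Solving \eqref{eq:def-rem} for the remainder,
\[
r_d(t) = \Big(\tfrac{\pi d}{2}\Big)^{1/4} e^{-t^2/4} H_d(t) - (-1)^{\frac{d-1}{2}} \sin(\sqrt d\, t),
\]
which is odd in $t$, so $r_d(0)=0$.

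\textbf{Step 2 (derivative).} Differentiating,
\[
r_d'(t) = \Big(\tfrac{\pi d}{2}\Big)^{1/4} e^{-t^2/4}\Big(H_d'(t) - \tfrac t2 H_d(t)\Big) - (-1)^{\frac{d-1}{2}}\sqrt d\cos(\sqrt d\, t).
\]
Use $H_d' = \sqrt d\, H_{d-1}$. Since $d-1$ is even, apply \eqref{eq:def-rem} to $H_{d-1}$:
\[
\Big(\tfrac{\pi (d-1)}{2}\Big)^{1/4} e^{-t^2/4} H_{d-1}(t) = (-1)^{\frac{d-1}{2}}\cos(\sqrt{d-1}\,t) + r_{d-1}(t).
\]
The sign matches that of the cosine term above. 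Then several terms are small for $|t|\le d^{-1/2}$:
\begin{itemize}
\item By Lemma~\ref{claim:plancherel-rotach}, $|r_{d-1}(t)| = O(1/d)$ there, since $t$, $t^2$ and $t^3$ are at most $d^{-1/2}$, $d^{-1}$ and $d^{-3/2}$.
\item $\cos(\sqrt{d-1}\,t) - \cos(\sqrt d\, t) = O(t/\sqrt d) = O(1/d)$.
\item $(d/(d-1))^{1/4} = 1 + O(1/d)$.
\end{itemize}
Multiplying by $\sqrt d$, the main terms cancel up to $O(\sqrt d \cdot 1/d) = O(1/\sqrt d)$.

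The remaining piece is $\tfrac t2 H_d(t)\, d^{1/4} e^{-t^2/4}$. By Corollary~\ref{cor:uniform-small-hermite} this is $O(t) = O(d^{-1/2})$. Hence $|r_d'(s)| = O(1/\sqrt d)$ uniformly on $|s|\le d^{-1/2}$.

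\textbf{Step 3 (conclusion).} By the mean value theorem, $|r_d(t)|/t \le \sup_{s\in[0,t]}|r_d'(s)| = O(1/\sqrt d)$.

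The main obstacle is Step 2. Lemma~\ref{claim:plancherel-rotach} controls only $r_{d-1}$, not its derivative, so we must avoid differentiating a remainder and instead route the derivative through the identity $H_d' = \sqrt d\, H_{d-1}$. It also requires checking that the parities and signs of the sine and cosine terms align, so the leading $\sqrt d$ terms actually cancel.
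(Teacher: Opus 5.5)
Your proof is correct and follows essentially the same route as the paper. Both arguments use $r_d(0)=0$ with the mean value theorem, then bound $r_d'$ on $[0,d^{-1/2}]$ by writing $H_d'=\sqrt d\,H_{d-1}$ and inserting the Plancherel--Rotach expansion of $H_{d-1}$, so that the leading $\sqrt d$ terms cancel. You are in fact slightly more careful than the paper, which silently replaces $(\pi(d-1)/2)^{1/4}$ and $\cos(\sqrt{d-1}\,t)$ by $(\pi d/2)^{1/4}$ and $\cos(\sqrt d\,t)$; your explicit $O(1/d)$ bounds on these discrepancies fill that small gap.
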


\begin{proof}[Proof of~\Cref{cor:small-r} from~\Cref{claim:plancherel-rotach}]
    We apply Taylor's inequality (for $d$ odd, we have
    $r_d(0)=0)$:
    \begin{equation}\label{eq:taylor-rd}
        \sup_{t\in (0,d^{-1/2}]} \frac {|r_d(t)|} t\le \sup_{t\in (0,d^{-1/2}]} |r_d'(t)|\,.
    \end{equation}
    Next, we compute $r'_d(t)$. Differentiating~\eqref{eq:def-rem} term by term, we get
    \[
        H_d'(t) = \left(\frac 2 {\pi d}\right)^{1/4} e^{t^2/4}\left( \frac t 2 ((-1)^{\frac{d-1} 2}\sin(t\sqrt d) + r_d(t)) + \sqrt d (-1)^{\frac{d-1} 2}\cos(t\sqrt d) + r_d'(t)\right)\,.
    \]
    Using the identity $H_d'(t) = \sqrt d H_{d-1}(t)$, we obtain, after rearranging:
    \[
        r_d'(t) = \sqrt d \left[\left(\frac {\pi d} 2\right)^{1/4} e^{-t^2/4} H_{d-1}(t)  - (-1)^{\frac{d-1} 2} \cos(t\sqrt d)\right] -  \frac t 2 \left[ (-1)^{\frac{d-1} 2} \sin(t \sqrt d) + r_d(t)\right]\,. 
    \]
    We show separately that the first and the second term are small in absolute value. 
    The second term is $O(d^{-1/2})$ whenever
    $t\le d^{-1/2}$. For the first term, we apply~\Cref{claim:plancherel-rotach} to $H_{d-1}(t)$ (in
    the regime $t\le d^{-1/2}$, the error term $1/d$ dominates):
    \[
        \sup_{t\in [0, d^{-1/2}]} \left|\left(\frac {\pi d} 2\right)^{1/4} e^{-t^2/4} H_{d-1}(t) - (-1)^{\frac {d-1} 2} \cos(t\sqrt d)\right|\le O\left(\frac 1 d\right)\,.
    \]
    The proof follows from 
    multiplying by $\sqrt d$
    and substituting the bound
    in~\eqref{eq:taylor-rd}.
\end{proof}

\subsubsection{\texorpdfstring{Proof of \Cref{PROP:sign}}{Proof}}

Applying the Hermite asymptotic
expansion of $H_{d-1}(0)$ to our target~\eqref{eq:target-hermite},
\begin{align*}
    \Pi_d \sign(x) &= \sqrt{\frac {2d} \pi} H_{d-1}(0) \int_0^x \frac {H_d(t)} t \diff t\tag*{(by \Cref{lem:expansion-sign})}\\
    &= \left((-1)^{\frac {d-1} 2} \left(\frac 2 {\pi}\right)^{\frac 34} d^{\frac 14} + O(d^{-\frac 34} )\right) \int_0^x \frac {H_d(t)} t \diff t   \tag*{(by \Cref{claim:equiv-zero})}\,.
\end{align*}
The main technical
part of the argument lies in bounding the integral
of $t\mapsto H_d(t)/t$.

\begin{lemma}[Small $t$]\label{lem:small-hermite}
    If $d$ is odd, then for all $\tau=\tau(d)\le d^{1/6}$,
    \[
        \int_0^{\tau} \frac {|H_d(t)|} t \diff t \le O(d^{\frac14})\cdot \tau e^{\tau^2/4}\,.
    \]
\end{lemma}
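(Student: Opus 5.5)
We split the integral at $t = d^{-1/2}$ and treat the two pieces separately:
\[
\int_0^{\tau} \frac{|H_d(t)|}{t} \diff t = \int_0^{\min(\tau, d^{-1/2})} \frac{|H_d(t)|}{t}\diff t + \int_{d^{-1/2}}^{\tau} \frac{|H_d(t)|}{t}\diff t ,
\]
with the second integral empty if $\tau \le d^{-1/2}$.

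\emph{The piece near the origin.} On $(0, d^{-1/2}]$ we use the expansion~\eqref{eq:def-rem} with $d$ odd:
\[
|H_d(t)| \le e^{t^2/4}\left(\tfrac{2}{\pi d}\right)^{1/4}\bigl(|\sin(\sqrt d\, t)| + |r_d(t)|\bigr).
\]
Since $|\sin(\sqrt d\, t)| \le \sqrt d\, t$, and \Cref{cor:small-r} gives $|r_d(t)| \le O(t/\sqrt d)$, we obtain $|H_d(t)|/t \le O(d^{-1/4}\sqrt d) = O(d^{1/4})$ uniformly on this interval, because $e^{t^2/4} = O(1)$ there. Integrating over an interval of length $\min(\tau, d^{-1/2}) \le \tau$ contributes $O(d^{1/4})\,\tau$.

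\emph{The piece away from the origin.} On $[d^{-1/2}, \tau]$, with $\tau \le d^{1/6}$, \Cref{cor:uniform-small-hermite} gives $|H_d(t)| \le O(d^{-1/4})\, e^{t^2/4} \le O(d^{-1/4})\, e^{\tau^2/4}$. Hence
\[
\int_{d^{-1/2}}^{\tau} \frac{|H_d(t)|}{t}\diff t \le O(d^{-1/4})\, e^{\tau^2/4} \log(\tau\sqrt d).
\]
Since $\tau \le d^{1/6}$, we have $\log(\tau\sqrt d) = O(\log d)$, which is far smaller than $d^{1/2}$. It remains to compare this with $d^{1/4}\tau e^{\tau^2/4}$. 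In this case $\tau \ge d^{-1/2}$, so $d^{1/4}\tau \ge d^{-1/4}$. We need $d^{-1/4}\log(\tau\sqrt d) \le O(d^{1/4}\tau)$, i.e. $\log(\tau\sqrt d) \le O(\tau\sqrt d)$. This holds because $\log u \le u$ for $u = \tau\sqrt d \ge 1$. So the second piece is also $O(d^{1/4})\,\tau e^{\tau^2/4}$.

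Summing the two bounds gives the claim. The only delicate point is the first region, where naive bounds on $|H_d(t)|$ lose the factor $t$ needed to cancel the $1/t$ singularity. This is exactly what \Cref{cor:small-r} supplies, together with the vanishing of $\sin(\sqrt d\, t)$ at $0$ (both rely on $d$ being odd). Everything else is bookkeeping with the uniform bound from \Cref{cor:uniform-small-hermite} and the elementary inequality $\log u \le u$.
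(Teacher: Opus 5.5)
Your proof is correct, but it is organized differently from the paper's.

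\textbf{The paper's route.} The paper does not split the interval. It uses $H_d(0)=0$ and Taylor's inequality to get $|H_d(t)|\le t\,\sup_{[0,\tau]}|H_d'|$ on all of $[0,\tau]$. It then writes $H_d'=\sqrt d\,H_{d-1}$ and applies \Cref{cor:uniform-small-hermite} to $H_{d-1}$. This gives $\sup_{[0,\tau]}|H_d'|\le O(d^{1/4})e^{\tau^2/4}$, and the claim follows in one line.

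\textbf{Your route.} You handle $(0,d^{-1/2}]$ with the expansion~\eqref{eq:def-rem} together with \Cref{cor:small-r}. You handle $[d^{-1/2},\tau]$ with the uniform bound of \Cref{cor:uniform-small-hermite} applied to $H_d$ itself. This costs a factor $\log(\tau\sqrt d)$, which you absorb using $\log u\le u$.

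\textbf{What each buys.} The paper's argument is shorter and needs neither the splitting nor \Cref{cor:small-r}. Note that \Cref{cor:small-r} is itself proved via $H_d'=\sqrt d\,H_{d-1}$ and the asymptotics of $H_{d-1}$. So your argument uses the same ingredients, just less directly. In return, your decomposition exposes a sharper intermediate estimate. For $\tau\ge d^{-1/2}$ you have in fact shown that the integral is $O(d^{-1/4})\bigl(1+e^{\tau^2/4}\log(\tau\sqrt d)\bigr)$. This reflects that $|H_d(t)|/t\approx d^{-1/4}/t$ away from the origin, whereas the Taylor argument charges the worst-case rate $d^{1/4}$ over the whole interval. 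The extra precision is not needed in the paper, since the lemma is only used with a $d^{-3/4}$ prefactor in the proof of \Cref{PROP:sign}. You discard it at the last step to match the stated form.
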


\begin{proof}
    Since $H_d(0)=0$ (when $d$ is odd),  Taylor's inequality yields:
    \[
        \left|\int_0^{\tau} \frac {H_d(t)} t \diff t\right|\le \tau\cdot \sup_{t\in [0,\tau]} |H_d'(t)|\,.
    \]
    The result follows from the identity
    $H_d'(t) = \sqrt d \,H_{d-1}(t)$ and~\Cref{cor:uniform-small-hermite}.
\end{proof}

\begin{lemma}[Large $t$]\label{lem:large-hermite}
    For all $\tau\ge 1$,
    \[
        \int_0^\infty \int_{\tau}^{x} \frac {|H_d(t)|} t \diff t \diff \varphi(x)\le e^{-\tau^2/4}\,.
    \]
\end{lemma}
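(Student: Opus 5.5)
The plan is to swap the order of integration with Tonelli, bound the resulting Gaussian tail by the Mills ratio, and finish with Cauchy--Schwarz against the orthonormality of $H_d$.

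For $x<\tau$ the inner integral runs over an empty (or reversed) range. We read it as zero, consistent with how the lemma is used to control $\int_\tau^\infty$. So the quantity to bound is
\[
I := \int_\tau^\infty \int_\tau^x \frac{|H_d(t)|}{t} \diff t \diff\varphi(x).
\]
The integrand is nonnegative, so Tonelli's theorem lets us exchange the integrals:
\[
I = \int_\tau^\infty \frac{|H_d(t)|}{t}\, \mathbb{P}_{X\sim\mathcal N_1}(X\ge t) \diff t.
\]

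Next we use the standard Gaussian tail (Mills ratio) bound $\mathbb{P}(X\ge t)\le \varphi(t)/t$ for $t>0$, writing $\varphi$ for the density. This gives
\[
I \le \int_\tau^\infty |H_d(t)|\, \frac{\varphi(t)}{t^2} \diff t.
\]
Now apply Cauchy--Schwarz in $L_2(\varphi(t)\diff t)$ restricted to $[\tau,\infty)$:
\[
I \le \left(\int_\tau^\infty H_d(t)^2\varphi(t)\diff t\right)^{1/2} \left(\int_\tau^\infty \frac{\varphi(t)}{t^4}\diff t\right)^{1/2}.
\]
By orthonormality of the Hermite polynomials, the first factor is at most $\|H_d\|_{L_2(\mathcal N_1)}=1$.

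For the second factor we use $\tau\ge 1$, so that $t^{-4}\le 1$ on the range of integration. Then the Mills ratio bound again gives
\[
\int_\tau^\infty \frac{\varphi(t)}{t^4}\diff t \le \mathbb{P}(X\ge\tau) \le \frac{\varphi(\tau)}{\tau} \le \frac{e^{-\tau^2/2}}{\sqrt{2\pi}}.
\]
Taking square roots yields $I\le (2\pi)^{-1/4}e^{-\tau^2/4}\le e^{-\tau^2/4}$, which is the claim.

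There is no real obstacle here. The only points needing care are the interpretation of the inner integral for $x<\tau$, and the fact that the bound is uniform in $d$. That uniformity comes for free, because we never used the pointwise size of $H_d$, only $\|H_d\|_{L_2}=1$. The Gaussian weight appears exactly once from the tail and is then split evenly by Cauchy--Schwarz, which is where the exponent $\tau^2/4$ comes from.
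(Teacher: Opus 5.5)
Your proof is correct and follows essentially the same route as the paper. Both arguments use Fubini/Tonelli to reduce to $\int_\tau^\infty \frac{|H_d(t)|}{t}\,\bar\Phi(t)\,\mathrm{d}t$, then Cauchy--Schwarz against $\|H_d\|_{L_2(\mathcal{N}_1)}=1$, then Gaussian tail bounds. The only difference is that you apply the Mills ratio before Cauchy--Schwarz, whereas the paper uses $\bar\Phi(t)\le\varphi(t)$ for $t\ge 1$ inside the second factor afterwards; your ordering even yields the slightly better constant $(2\pi)^{-1/4}$.
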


\begin{proof}
    Using the notation $\bar\Phi(t) = \Pr_{x\sim \mathcal N(0,1)}(x>t)$, we have
    \begin{align*}
        \int_0^\infty \int_{\tau}^{x} \frac {|H_d(t)|} t \diff t \diff \varphi(x) &=  \int_{\tau}^\infty \frac{|H_d(t)|} t \,\bar \Phi(t) \diff t\tag*{(by Fubini's theorem)}\\
        &\le \left(\int_{\tau}^\infty H_d(t)^2 \varphi(t) \diff  t\right)^{\frac12} \left(\int_{\tau}^\infty  \frac {\bar \Phi(t)^2} {t^2 \varphi(t)} \diff t\right)^{\frac12}\tag*{(by Cauchy-Schwarz)}\\
        &\le \left(\int_{\tau}^\infty  \varphi(t)\diff t\right)^{\frac12}\tag*{(orthonormality, $\bar\Phi(t)\le \varphi(t)$, and $\tau\ge 1$)}\\
        &\le e^{-\tau^2/4}\tag*{($\bar\Phi(\tau)\le e^{-\tau^2/2}$)}\,.
    \end{align*}
    
\end{proof}

\begin{proof}[Proof of~\Cref{PROP:sign}]
    We start by expanding $H_{d-1}(0)$ with~\Cref{claim:equiv-zero}:
    \begin{align*}
        &\int_0^\infty |1-\Pi_d\sign(x)|\diff \varphi(x) \\
        &= \int_0^\infty \left|1 - (-1)^{\frac{d-1} 2} \left(\frac 2 \pi\right)^{\frac 34} d^{\frac 14}\int_0^x \frac {H_d(t)} t \diff t\right| \diff \varphi(x) + O(d^{-\frac 34}) \cdot  \int_0^\infty \int_0^x \frac {|H_d(t)|} t \diff t \diff \varphi(x)\,.
    \end{align*}
    We bound the second term using~\Cref{lem:small-hermite} when $t\le d^{1/6}$ and \Cref{lem:large-hermite} when $t>d^{1/6}$:
    \begin{align*}
        d^{-\frac 34} \int_0^\infty \int_0^x \frac {|H_d(t)|} t \diff t \diff \varphi(x)&\le d^{-\frac 34} \left(\int_0^\infty O(d^{1/4}) \cdot xe^{x^2/4}  \diff \varphi(x)+ e^{-d^{1/3}/4}\right)\le O(d^{-\frac 12})\,.
    \end{align*}
    This concludes the analysis of the second term.
    Next, we split the inner integral of the first term into $t\in [0,\tau]$ and $t\in [\tau,\infty)$ for $\tau = \sqrt{100\log d}$.
    We start with the latter, which is negligible
    by~\Cref{lem:large-hermite}:
    \[
        d^{\frac 14} \int_0^\infty \int_{\tau}^{x} \frac{|H_d(t)|} t \diff t \diff\varphi(x)\le d^{\frac 14} e^{-\tau^2/4} = O(d^{-\frac 12})\,.
    \]
    We switch to $t\in [0,\tau]$, which gives the
    main contribution.
    We decompose as follows:
    \begin{align*}
        \int_0^\infty \left|1 - (-1)^{\frac{d-1} 2} \left(\frac 2 \pi\right)^{\frac 34} d^{\frac 14}\int_0^{\min(x,\tau)} \frac {H_d(t)} t \diff t\right| \diff \varphi(x)
        &\le \int_0^\infty \left|1 - \frac 2 \pi \int_0^{\min(x,\tau)} \frac {\sin(t\sqrt d)} t \diff t\right| \diff \varphi(x)\\
        &+\frac 2 \pi \int_0^\infty \left|\int_0^{\min(x,\tau)} \sin(t\sqrt d) \cdot \frac {e^{t^2/4} - 1} t \right|\diff \varphi(x)\\
        &+\frac 2 \pi \int_0^\infty \left|\int_0^{\min(x,\tau)} \frac {e^{t^2/4} r_d(t)} t \right|\diff \varphi(x)\,.
    \end{align*}
    where we recall that $r_d(t)$ is defined in~\eqref{eq:def-rem}.
    We will bound the three terms separately:
        \begin{align}
            \int_0^\infty \left|1-\frac 2 \pi \int_{0}^{\min(\tau,x)} \frac{\sin(t\sqrt d)} t \diff t\right|\diff \varphi(x) &\le O\left(\frac {\log d} {\sqrt d}\right)\label{lem:inter-main}\\
        \int_0^\infty \left|\int_{0}^{\min(x,\tau)} \frac{e^{t^2/4}-1} t \sin(t\sqrt d) \diff t\right|\diff \varphi(x) &\le O\left(\frac {1} {\sqrt d}\right)\label{lem:inter-rem1}\\
        \int_0^\infty \int_{0}^{\min(x,\tau)} {e^{t^2/4}} \frac {|r_d(t)|} t \diff t \diff \varphi(x) &\le O\left(\frac{1} {\sqrt d}\right)\,.\label{lem:inter-rem2}        
        \end{align}
    Assuming~\eqref{lem:inter-main},~\eqref{lem:inter-rem1}, and~\eqref{lem:inter-rem2}, the proof
    of~\Cref{PROP:sign} follows from the triangle inequality.
\end{proof}

\begin{proof}[Proof of~\eqref{lem:inter-main}]
    First, by a change of variable,
    \[
        \int_0^{\min(\tau,x)} \frac {\sin(t\sqrt d)} t \diff t = \int_0^{\min(\tau,x)\cdot \sqrt d} \frac{\sin u} u \diff u\,.
    \]
    \begin{claim}[See, e.g.,~{\cite[\S 6.12]{dlmf}}]
        \label{claim:SI}
        There exists a universal
        constant $C>0$ such that for any $z> 0$,
        \[
            \left|1 - \frac 2 \pi \int_0^z \frac {\sin t} t \diff t\right| \le C\min\left(1, \frac 1 z\right)\,.
        \]
    \end{claim}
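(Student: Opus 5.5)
The claim is about the sine integral $\mathrm{Si}(z) = \int_0^z \frac{\sin t}{t}\diff t$. We need $|1 - \frac 2\pi \mathrm{Si}(z)| \le C\min(1,1/z)$ for all $z>0$. I will treat the two regimes separately: a uniform bound for all $z$, and a $1/z$ bound for large $z$.

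\emph{Uniform bound.} I first show that $\mathrm{Si}$ is bounded on $(0,\infty)$. On $(0,1]$ we have $|\sin t/t|\le 1$, so $|\mathrm{Si}(z)|\le 1$ there. For $z\ge1$, integrating by parts as below shows that $\int_1^z \frac{\sin t}{t}\diff t$ is bounded by an absolute constant. Hence $|1-\frac2\pi\mathrm{Si}(z)|\le C_0$ for all $z>0$.

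\emph{Decay for large $z$.} I will use the classical value $\int_0^\infty \frac{\sin t}{t}\diff t = \frac\pi2$, understood as an improper Riemann integral. This can be quoted, e.g.\ from \cite[\S 6.12]{dlmf}, or derived via $\int_0^\infty e^{-st}\frac{\sin t}{t}\diff t = \arctan(1/s)$ and letting $s\to0$. It gives
\[
1-\frac2\pi\mathrm{Si}(z)=\frac2\pi\int_z^\infty\frac{\sin t}{t}\diff t .
\]
Integrating by parts with $u=1/t$ and $\diff v=\sin t\diff t$ yields
\[
\int_z^\infty\frac{\sin t}{t}\diff t=\frac{\cos z}{z}-\int_z^\infty\frac{\cos t}{t^2}\diff t .
\]
The boundary term at infinity vanishes because $\cos t/t\to0$. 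The absolute value of the right-hand side is at most $\frac1z+\int_z^\infty t^{-2}\diff t=\frac2z$. The same computation on $[1,z]$ gives the boundedness used in the uniform bound.

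Combining the two bounds, we can take $C=\max(C_0,\frac4\pi)$, using the uniform bound for $z\le1$ and the $1/z$ bound for $z\ge1$. The only nontrivial input is the Dirichlet integral value $\pi/2$, which is standard.
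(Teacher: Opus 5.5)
Your argument is correct. The paper does not prove this claim; it only cites DLMF \S 6.12, i.e., the large-$z$ behavior $\mathrm{Si}(z)=\frac{\pi}{2}-f(z)\cos z-g(z)\sin z$ with auxiliary functions $f(z)\sim 1/z$ and $g(z)\sim 1/z^{2}$. Your single integration by parts of the tail $\int_z^\infty \frac{\sin t}{t}\,\mathrm{d}t$ is exactly the elementary derivation of the leading term of that expansion. So the two routes agree in substance, and yours is self-contained (given the Dirichlet integral) with an explicit constant.

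One simplification: you never use the boundedness of $\mathrm{Si}$ on $[1,\infty)$. For $z\ge 1$ you apply the $2/z$ tail bound. For $z\le 1$ you have $0\le \mathrm{Si}(z)\le z\le 1$, hence $\bigl|1-\frac{2}{\pi}\mathrm{Si}(z)\bigr|\le 1$. Thus $C=\frac{4}{\pi}$ already suffices for all $z>0$.
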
 
    \noindent Applying the bound from~\Cref{claim:SI}, we get
    \[
        \int_0^\infty \left|1-\frac 2 \pi \int_{0}^{\min(\tau,x)} \frac{\sin(t\sqrt d)} t \diff t\right|\diff \varphi(x)\le O(1)\cdot \int_0^\infty \min\left(1,\frac 1 {\sqrt d \min(\tau,x)}\right) \diff \varphi(x)\,.
    \]
    We further split this integral into three parts, according to
    which of the arguments
    of the minima dominate:
    \begin{enumerate}
        \item between $0$ and $\frac 1 {\sqrt d}$, the integral
    is bounded by $O\left(\frac 1 {\sqrt d}\right)$ by the first bound.
        \item between
        $\frac 1 {\sqrt d}$ and $\tau$, $\min(\tau,x)=x$ so the integral is $O\left(\frac {\log d} {\sqrt d}\right)$ by
        the second bound.
        \item between $\tau$ and $+\infty$,
     the integral is $O\left(\frac 1 {\sqrt d}\right)$ by
    the second bound.
    \end{enumerate}
    Taken together, these observations conclude the proof.
\end{proof}

\begin{proof}[Proof of~\eqref{lem:inter-rem1}]
    Let $u(t) = \frac{e^{t^2/4} - 1} t$ extended
    by continuity with $u(0)=0$. A
    calculation shows that 
    \begin{equation}
        \max(|u(t)|,|u'(t)|)\le \frac{e^{t^2/4}} 2\quad\text{ for all $t\ge 0$.}\label{eq:decay-uup}
    \end{equation}
    
    The inner integral we want to study is
    a product of $u$ with an oscillatory factor
    of frequency $\sqrt d$, so one expects
    substantial cancellations as $d\to\infty$.
    We make this explicit by integrating by parts:
    \begin{align*}
        \int_{0}^{\min(x,\tau)} \sin(t\sqrt d) u(t) \diff t &= \left[-\frac {\cos(t\sqrt d)} {\sqrt d} u(t)\right]^{\min(x,\tau)}_{0} + \frac 1 {\sqrt d} \int_{0}^{\min(x,\tau)} \cos(t\sqrt d) u'(t)\diff t\,.
    \end{align*}
    Hence, by the triangle inequality and~\eqref{eq:decay-uup},
    \[
        \left|\int_{0}^{\min(x,\tau)} \sin(t\sqrt d) u(t) \diff t\right|\le \frac {(1+x) e^{x^2/4}} {2 \sqrt d}\,.
    \]
    Integrating the right-hand side against $\varphi$ yields $O(d^{-\frac 12})$, as
    required.
\end{proof}

\begin{proof}[Proof of~\eqref{lem:inter-rem2}]
    We decompose the inner integral according to whether
    $t\le d^{-1/2}$ or $t>d^{-1/2}$, and use~\Cref{cor:small-r} on the first part:
    \begin{align*}
        \int_0^\infty \int_{0}^{{\min(x,\tau)}} {e^{t^2/4}} \frac {|r_d(t)|} t \diff t \diff \varphi(x) &\le O\left(\frac 1 {\sqrt d}\right)\cdot \int_0^\infty {xe^{x^2/4}} \diff \varphi(x)\\ 
        &+ \int_{d^{-1/2}}^\infty \int_{d^{-1/2}}^{\min(x,\tau)} {e^{t^2/4}} \frac {|r_d(t)|} t \diff t \diff \varphi(x)\,.
    \end{align*}
    The first term is $O(d^{-1/2})$, and for the second
    we use Fubini's theorem:
    \begin{align*}
        \int_{d^{-1/2}}^\infty \int_{d^{-1/2}}^{\min(x,\tau)} {e^{t^2/4}} \frac {|r_d(t)|} t \diff t \diff \varphi(x) = \int_{d^{-1/2}}^\tau \frac {|r_d(t)|} t e^{t^2/4} \bar \Phi(t)\diff t\le \int_{d^{-1/2}}^\tau \frac {|r_d(t)|} t e^{-t^2/4} \diff t\,,
    \end{align*}
    where we used the notation $\bar\Phi(t)$ for the
    Gaussian tails as in the proof of~\Cref{lem:large-hermite}, and applied the standard bound $\bar \Phi(t)\le e^{-t^2/2}$.
    
    When $t\le \tau$ and $t\ge \frac 1 {\sqrt d}$ (i.e., $\frac t {\sqrt d}\ge \frac 1 d$) the
   upper bound from~\Cref{claim:plancherel-rotach} simplifies
    to:
    \[
        |r_d(t)|\le O\left(\frac {t^3} {\sqrt d} + \frac t {\sqrt d} + \frac {t^2} d\right)\,.
    \]
    Substituting this into the  integral above, we obtain:
    \[
        \int_{d^{-1/2}}^\tau \frac {|r_d(t)|} t e^{-t^2/4} \diff t\le O\left(\frac 1 {\sqrt d}\right)\,,
    \]
    as desired.
\end{proof}